\title{Polynomial-time Algorithms for Multiple-arm Identification \\ with Full-bandit Feedback}
\author{{\bf Yuko Kuroki}\\
  {\normalsize The University of Tokyo and RIKEN AIP}\\
  {\normalsize \texttt{ykuroki@ms.k.u-tokyo.ac.jp}}
  \and 
  {\bf Liyuan Xu}\\
  {\normalsize The University of Tokyo and RIKEN AIP}\\
  {\normalsize \texttt{liyuan@ms.k.u-tokyo.ac.jp}}
  \and 
  {\bf Atsushi Miyauchi}\\
  {\normalsize RIKEN AIP}\\
  {\normalsize \texttt{atsushi.miyauchi.hv@riken.jp}}
  \and
  {\bf Junya Honda}\\
  {\normalsize The University of Tokyo and RIKEN AIP}\\
  {\normalsize \texttt{honda@stat.t.u-tokyo.ac.jp}}
  \and
  {\bf Masashi Sugiyama }\\
  {\normalsize The University of Tokyo and RIKEN AIP}\\
  {\normalsize \texttt{sugi@k.u-tokyo.ac.jp}}
}
\date{\empty}
\newtheoremstyle{case}{}{}{}{}{}{:}{ }{}
\theoremstyle{case}
\newcommand{\argmax}{\mathop{\rm argmax}}
\newcommand{\argmin}{\mathop{\rm argmin}}
\newcommand{\maxbelow}{\mathop{\rm max}\limits}
\renewcommand{\mid}{\,:\,}
\newcommand{\maxM}{\max_{M \in {\cal M}}}
\newcommand{\cE}{\mathcal{E}}
\newcommand{ \sm   }{ \scalebox{0.8} }
\newcommand{ \chii}{\bm{\chi}_{ \scalebox{0.6} {$M_i$} }}
\newcommand{ \chiM}{\bm{\chi}_{ \scalebox{0.6} {$M$}} }
\newcommand{ \chiMprime}{\bm{\chi}_{ \scalebox{0.6} {$M'_t$}} }
\newcommand{ \chiMstar}{\bm{\chi}_{ \scalebox{0.6} {$M^*$}} }
\newcommand{ \chiMbar}{\bm{\chi}_{ \scalebox{0.6} {$M_i$}} }
\newcommand{ \chiMhatstar}{\bm{\chi}_{\scalebox{0.6} {$\widehat{M}^*_t$}} }
\newcommand{ \chiempbest}{\bm{\chi}_{ \scalebox{0.6} {$\widehat{M}^*_t$}}}
\newcommand{\cM}{ {\cal M}  }
\newcommand{\Atin}{A_{{\bf x}_t}^{-1}}
\newcommand{\At}{A_{{\bf x}_t}}
\newcommand{\He}{H_{\varepsilon} }
\newcommand{\bchi}{\bm{\chi}}
\newcommand{\thetahat}{\widehat{\theta}_t}
\newcommand{\Mhatstar}{\widehat{M}^*}
\newcommand{\normAhatstar}{\| \bchi_{\sm {$\Mhatstar$}} \|_{\Atin}}
\newcommand{\normA}{ \|\bchi_{\sm {$M$}} \|_{\Atin}}
\newcommand{\normAstar}{ \|\bchi_{\sm {$M^*$}} \|_{\Atin}}
\newcommand{\normAxy}{ \|\bchi_{\sm {$M$}} -\bchi_{\sm {$M^*$}} \|_{\Atin}}
\theoremstyle{plain}
\newtheorem{theorem}     {Theorem}
\newtheorem{lemma}       {Lemma}
\newtheorem{proposition} {Proposition}
\newtheorem{corollary}   {Corollary}
\begin{document}
\abovedisplayskip=3.5pt
\belowdisplayskip=3.5pt
\setlength\textfloatsep{16pt}

\maketitle

\begin{abstract}
We study the problem of stochastic {\em multiple-arm identification}, where an agent sequentially explores a size-$k$ subset of arms (a.k.a.~a {\em super-arm}) from given $n$ arms and tries to identify the best super-arm. Most existing work has considered the {\em semi-bandit} setting, where the agent can observe the reward of each pulled arm, or assumed each arm can be queried at each round. However, in real-world applications, it is costly or sometimes impossible to observe a reward of individual arms. In this study, we tackle the {\em full-bandit} setting, where only a noisy observation of the total sum of a super-arm is given at each pull. Although our problem can be regarded as an instance of the best arm identification in linear bandits, a naive approach based on linear bandits is computationally infeasible since the number of super-arms $K$ is exponential. To cope with this problem, we first design a polynomial-time approximation algorithm for a 0-1 quadratic programming problem arising in confidence ellipsoid maximization. Based on our approximation algorithm, we propose a bandit algorithm whose computation time is $O(\log K)$, thereby achieving an exponential speedup over linear bandit algorithms. We provide a sample complexity upper bound that is still worst-case optimal. Finally, we conduct experiments on large-scale datasets with more than $10^{10}$ super-arms, demonstrating the superiority of our algorithms in terms of both the computation time and the sample complexity.
\end{abstract}
\section{Introduction}\label{sec:intro}
The stochastic {\em multi-armed bandit (MAB)} is a classical decision making model, 
which characterizes the trade-off between exploration and exploitation in stochastic environments~\cite{Lai1985}.
While the most well-studied objective is to minimize the cumulative regret or maximize the cumulative reward~\cite{ bubeck2012,cesa2006prediction},
another popular objective is to identify the best arm with the maximum expected reward from given $n$ arms.
This problem,
called {\em  pure exploration} or {\em best arm identification} in the MAB,
has received much attention recently~\cite{Audibert2010, Chen2015,Even2002,Even2006,Jamieson2014,kaufmann2016}.

An important variant of the MAB is the {\em multiple-play} MAB problem (MP-MAB),
in which the agent pulls $k\ (\geq 1)$ different arms at each round~\cite{agrawal1990multi,anantharam1987,  Komiyama2015,largre2016}.
In many application domains,
we need to make a decision to
take multiple actions among a set of all possible choices.
For example,
in online advertisement auctions,
companies want to choose multiple keywords to promote their products to consumers based on their search queries~\cite{Rusmevichientong2006}.
From millions of available choices,
a company aims to find the most effective set of keywords by observing the historical performance of the chosen keywords.
This decision making is formulated as the MP-MAB,
where each arm corresponds to each keyword.
In addition, MP-MAB has further applications such as
channel selection in cognitive radio networks~\cite{Huang2008},
ranking web documents~\cite{Radlinski2008},
and crowdsoursing~\cite{Zhou2014}.

In this paper,
we study the {\em multiple-arm identification} 
that corresponds to the pure exploration in the MP-MAB.
In this problem,
the goal is to find the size-$k$ subset (a.k.a.~a super-arm) with the maximum expected rewards.
The problem is also called the {\em  top-$k$ selection}
or {\em $k$-best arm identification},
and has been extensively studied recently \cite{bubeck2013, Cao2015,  Gabillon2012,Gabillon2011,kalyanakrishnan2010,kalyanakrishnan2012, kaufmann2013, Roy2017, Roy2019, Zhou2014}. 
The above prior work has considered the {\em semi-bandit} setting,
in which we can observe a reward of each single-arm in the pulled super-arm,
or assumed that a single-arm can be queried.
However, in many application domains,
it is costly to observe
a reward of individual arms, 
or sometimes we cannot access feedback from individual arms.
For example, in crowdsourcing,
we often obtain a lot of labels given by crowdworkers,
but it is costly to compile labels according to labelers.
Furthermore,
in software projects,
an employer may have complicated tasks that need multiple workers, in which the employer can only evaluate the quality of a completed task rather than a single worker performance~\cite{Retelny2014,Tran2014}.
In such scenarios,
we wish to extract expert workers who can perform the task with high quality, 
only from a sequential access to the quality of the task completed by multiple workers.

In this study,
we tackle the multiple-arm
identification with {\em full-bandit} feedback,
where only a noisy observation of the total sum of a super-arm
is given at each pull rather than a reward of each pulled singe-arm.
This setting is more challenging 
since estimators of expected rewards of single-arms are no longer independent of each other.
We can see our problem as an instance of the pure exploration in \emph{linear bandits}, which has received increasing attention~\cite{Lattimore2017,Soare2014, Tao2018, Xu2018}.
In linear bandits, 
each arm has its own feature $x \in \mathbb{R}^n$,
while in our problem,
each super-arm can be associated with a vector $x \in \{0, 1\}^n$.
Most linear bandit algorithms have, however, the time complexity at least proportional to the number of arms.
Therefore, a naive use of them is computationally infeasible since the number of super-arms $K={ n \choose k}$ is exponential.
A modicum of research on linear bandits addressed the time complexity
~\cite{Kwang2017, Tao2018};
Jun et al.~\cite{Kwang2017} proposed efficient algorithms for regret minimization, which results in the sublinear time complexity $O(K^{\rho})$ for $\rho\in (0,1)$.
Nevertheless, in our setting, they still have to spend $O(n^{\rho k})$ time, where $\rho \in (0,1)$ is a constant, which is exponential.
Thus, to perform multiple-arm identification with full-bandit feedback in practice, the computational infeasibility needs to be overcome
since fast decisions are required in real-world applications.


\paragraph{Our Contribution.}
In this study, we design algorithms,
which are efficient in terms of both the time complexity and the sample complexity.
Our contributions are summarized as follows:

(i)
We propose a polynomial-time approximation algorithm (Algorithm~\ref{alg:QM}) for an NP-hard $0$-$1$ quadratic programming problem arising in confidence ellipsoid maximization.
 In the design of the approximation algorithm,
 we utilize algorithms for a classical combinatorial optimization problem called the {\em densest $k$-subgraph problem (D$k$S)}~\cite{Feige2001}.
 Importantly, we provide a theoretical guarantee for the approximation ratio of our algorithm (Theorem~\ref{thm:approx}).
 
(ii)
Based on our approximation algorithm,
we propose a bandit algorithm (Algorithm~\ref{alg:ECB}) that runs in $O(\log K)$ time (Theorem~\ref{thm:poly}),
and provide an upper bound of the sample complexity (Theorem~\ref{thm:topk}) that is still worst-case optimal. This result means that our algorithm achieves an exponential speedup over linear bandit algorithms while keeping the statistical efficiency.
Moreover,
we propose another algorithm (Algorithm~\ref{alg:ECB2}) that employs the first-order approximation of confidence ellipsoids,
which empirically performs well.


(iii) We conduct a series of experiments on both synthetic and real-world datasets.
First, we run our proposed algorithms on synthetic datasets
and verify that our algorithms give good approximation to an
{\em exhaustive} search algorithm.
Next, we evaluate our algorithms on large-scale crowdsourcing datasets with more than $10^{10}$ super arms, demonstrating the superiority
of our algorithms in terms of both the
time complexity and the sample complexity.

Note that the multiple-arm identification problem
is a special class of the {\em combinatorial pure exploration},
where super-arms follow certain combinatorial constraints such as paths, matchings, or matroids~\cite{cao2017, Chen2016matroid,Chen17a,Chen2014,Gabillon2016,huang2018,pierre2019}.
We can also design a simple algorithm (Algorithm~1 in Appendix~A) for the combinatorial pure exploration under general constraints with full-bandit feedback, which results in looser but general sample complexity bound. For details, see Appendix~A.
Owing to space limitations,
all proofs in this paper are given in Appendix~F.

\section{Preliminaries}\label{sec:prob}




\paragraph{Problem definition.}
Let $[n]=\{1,2,\dots, n\}$ for an integer $n$.
For a vector $x \in \mathbb{R}^n $ and a matrix $B\in \mathbb{R}^{n\times n}$, let $\|x\|_B=\sqrt{x^\top Bx}$. 
For a vector $\theta \in \mathbb{R}^n$
and a subset $S \subseteq [n]$, 
we define $\theta(S)  =\sum_{e\in S} \theta(e)$.  
Now, we describe the problem formulation formally.
Suppose that there are $n$ single-arms associated with unknown reward distributions $\{\phi_1,\dots, \phi_n\}$.
The reward from $\phi_e$ for each single-arm $e \in [n]$ is expressed as $X_t(e)=\theta(e)+ \epsilon_t(e)$, where $\theta(e)$ is the expected reward and $\epsilon_t(e)$ is the zero-mean noise bounded
in $[-R,R]$ for some $R>0$.
The agent chooses a size-$k$ subset
from $n$ single-arms at each round $t$ for an integer $k>0$.
In the well-studied {\em semi-bandit} setting,
the agent pulls a subset $M_t$,
and then she can observe $X_t(e)$ for each $e \in M_t$ independently sampled from the associated unknown distribution $\phi_e$.
However, in the {\em full-bandit} setting,
she can only observe the sum of rewards $r_{M_t}= \theta(M_t)+\sum_{e \in M_t}\epsilon_t(e)$ at each pull,
which means that
estimators of expected rewards of single-arms are no longer independent of each other.

We call a size-$k$ subset of single-arms a {\em super-arm}.
We define a {\em decision class} ${ \cal M}$ as a finite set of super-arms that satisfies the size constraint, i.e.,
${ \cal M}= \{M \subseteq 2^{[n]} \mid |M|=k \}$; thus, the size of the decision class is given by $K = {n \choose k}$.
Let $M^*$ be the optimal super-arm in
the decision class ${\cal M}$,
i.e.,
$M^*={\mathrm{arg\,max}}_{M \in {\cal M}}\theta(M)$.
In this paper,
we focus on the \emph{$(\varepsilon,\delta)$-PAC} setting,
where the goal is to design an algorithm to output the super-arm ${\tt Out} \in {\cal M}$ that satisfies for $\delta \in (0,1)$ and $\varepsilon>0$,
$\Pr[\theta(M^*) - \theta({\tt Out}) \leq \varepsilon ] \geq 1-\delta$.
An algorithm is called {\em $(\varepsilon,\delta)$-PAC} if it satisfies 
this condition.
In the {\em fixed confidence} setting,
the agent's performance is evaluated by her {\em sample complexity}, 
i.e., the number of rounds until the agent terminates.

\paragraph{Technical tools.}

In order to handle full-bandit feedback,
we utilize approaches for best arm identification in
linear bandits. 
Let ${\bf M}_t=(M_1, M_2, \ldots, M_t) \in \cM^t$ be a sequence of super-arms
and $(r_{M_1}, \ldots, r_{M_t}) \in \mathbb{R}^{ t}$ be the corresponding sequence of observed rewards.
Let $\bm{\chi}_{\sm {$M$}} \in \{0,1\}^n$ denote the indicator vector of super-arm $M \in  {\cal M}$;
for each $e \in [n]$, $\chiM(e)=1$ if $e \in M$ and $\chiM(e)=0$ otherwise.
We define the sequence of indicator vectors corresponding to ${\bf M}_t$ as 
${\bf x}_t=(\bm{\chi}_{\sm {$M_1$}},\ldots,\bm{\chi}_{\sm {$M_t$}})$.
An unbiased least-squares estimator for $\theta \in \mathbb{R}^n$
can be obtained by
$\widehat{\theta}_t
=A_{{\bf x}_t }^{-1}b_{{\bf x}_t}\in \mathbb{R}^{n }, 
$
where 
$A_{{\bf x}_t}= \sum_{i=1}^t \chii \chii^{\top}  \in \mathbb{R}^{n \times n}
\ \ \text{and} \ \
b_{{\bf x}_t}=\sum_{i=1}^t\chii r_{M_i} \in \mathbb{R}^n.
$
It suffices to consider the case where  $A_{{\bf x}_t}$ is invertible,
since we shall exclude a redundant feature when any sampling strategy cannot make $A_{{\bf x}_t}$ invertible.
We define the empirical best super-arm as $\Mhatstar_t=\argmax_{M \in \cM}\thetahat(M)$.
\paragraph{Computational hardness.}
The agent continues sampling a super-arm
until a certain stopping condition is satisfied.
In order to check the stopping condition,
existing algorithms for best arm identification in linear bandits involve the following confidence ellipsoid maximization:
\begin{align}\label{prob:CEM}
\text{CEM:}\quad \text{max.} \  \|\chiM \|_{\Atin} \ \ \text{s.t.}\   M \in \cM,
\end{align}
where recall that $\|\chiM \|_{\Atin}= \sqrt{\chiM^{\top} \Atin \chiM}$.
Existing algorithms in linear bandits
implicitly assume that an optimal solution to CEM can be exhaustively searched
(e.g. \cite{Soare2014, Xu2018}).
However,
since the number of super-arms $K$ is exponential in our setting,
it is computationally intractable to exactly solve it.
Therefore,
we need its approximation or
a totally different approach 
for solving the multiple-arm identification with full-bandit feedback.

\section{Confidence Ellipsoid Maximization}\label{sec:topk}
In this section,
we design an approximation algorithm for
confidence ellipsoid maximization CEM.
In the combinatorial optimization literature,
an algorithm is called an \emph{$\alpha$-approximation algorithm}
if it returns a solution that has an objective value greater than or equal to the optimal value times $\alpha \  \in (0, 1]$ for any instance.
Let $W \in \mathbb{R}^{n \times n}$ be a symmetric matrix.
CEM introduced above can be naturally represented by the following 0-1 quadratic programming problem:
\begin{align}\label{prob:QM}
\text{QP:}\quad  \text{max.}\ \sum_{i=1}^n\sum_{j=1}^n w_{ij}x_i x_j
\ \ \ \text{s.t.}\ \ \sum_{i=1}^n x_i=k,\ \ x_i \in \{0,1\}, \ \forall i \in [n].
\end{align}

Notice that QP can be seen as an instance of the {\em uniform quadratic knapsack problem}, which is known to be NP-hard~\cite{Taylor2016}, and there are few results of polynomial-time approximation algorithms even for a special case (see Appendix~C for details).

In this study,
by utilizing algorithms for
a classical combinatorial optimization problem,
called the
{\em densest $k$-subgraph problem (D$k$S)},
we design an approximation algorithm that admits theoretical performance guarantee for QP with positive definite matrix $W$.
The definition of the D$k$S is as follows.
Let $G=(V,E,w)$ be an undirected graph with nonnegative edge weight $w=(w_e)_{e\in E}$. 
  \begin{algorithm}[t]
\caption{Quadratic Maximization}\label{alg:QM}
\SetKwInOut{Input}{Input}
\SetKwInOut{Output}{Output}
\Input{ Symmetric matrix $W\in \mathbb{R}^{n\times n}$}
  
$V \leftarrow [n]$;

$E\leftarrow \{\{i,j\}\mid i,j\in [n],\ i\neq j\}$;
    
\textbf{for} $\{i,j\}\in E$ \textbf{do} $\widetilde{w}_{ij}\leftarrow w_{ij}+w_{ii}+w_{jj}$;

Construct $\widetilde{G}=(V,E,\widetilde{w})$; 

$S \leftarrow $ D$k$S-Oracle$(\widetilde{G})$;

\Return $S$
\end{algorithm}
For a vertex set $S\subseteq V$, let $E(S)=\{\{u,v\}\in E\mid u,v\in S\}$ 
be the subset of edges in the subgraph induced by $S$. 
We denote by $w(S)$ the sum of the edge weights in the subgraph induced by $S$,
i.e., $w(S)=\sum_{e\in E(S)}w_e$.
In the D$k$S, given $G=(V,E,w)$ and positive integer $k$, 
we are asked to find $S\subseteq V$ with $|S|=k$ that maximizes $w(S)$. 
Although the D$k$S is NP-hard,
there are a variety of polynomial-time approximation algorithms~\cite{Asahiro2000,Bhaskara2010,Feige2001}.
The current best approximation result for the D$k$S has an approximation ratio of $\Omega(1/|V|^{{1/4}+\epsilon})$ for any $\epsilon >0$~\cite{Bhaskara2010}. 
The direct reduction of QP to the D$k$S results in an instance that has arbitrary weights of edges.
Existing algorithms cannot be used for such an instance
since these algorithms
need an assumption that
the weights of all edges are nonnegative.


  
    




Now we present our algorithm for QP,
which is detailed in Algorithm~\ref{alg:QM}.
The algorithm operates in two steps. 
In the first step, it constructs an $n$-vertex complete graph $ \widetilde{G}=(V, E, \widetilde{w})$ from a given symmetric matrix $W \in {\mathbb{R}^{n \times n}}$.
 For each $\{i,j\} \in E$,
 the edge weight $\widetilde{w}_{ij}$ is set to $w_{ij}+w_{ii}+w_{jj}$.
Note that if $W$ is positive definite,
$\widetilde{w}_{ij}\geq 0$ holds for every $\{i,j\}\in E$, 
which means that 
$\widetilde{G}$ is an instance of the D$k$S (see Lemma~4 in Appendix~F).
In the second step, the algorithm accesses the {\em densest $k$-subgraph oracle (D$k$S-Oracle)}, 
which accepts $\widetilde{G}$ as input and returns in polynomial time an approximate solution for the D$k$S. 
Note that we can use any polynomial-time approximation algorithm for the D$k$S as the D$k$S-Oracle. 
Let $\alpha_\text{D$k$S}$ be the approximation ratio of the algorithm employed by the D$k$S-Oracle.
By sophisticated analysis on the approximation ratio of Algorithm~\ref{alg:QM},
we have the following theorem.
\begin{theorem}\label{thm:approx}
For QP with any positive definite matrix $W\in \mathbb{R}^{n \times n}$,
Algorithm~\ref{alg:QM} with $\alpha_\text{D$k$S}$-approximation DkS-Oracle is a $\left(\frac{1}{k-1}
\frac{\lambda_{\min}(W)}{\lambda_{\max}(W)}
\alpha_\text{D$k$S}\right)$-approximation algorithm, 
where $\lambda_{\min}(W)$ and $\lambda_{\max}(W)$ represent the minimum and maximum eigenvalues of $W$, respectively.
\end{theorem}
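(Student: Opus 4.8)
The plan is to run the whole argument through the reduction that Algorithm~\ref{alg:QM} already performs: compare, set by set, the D$k$S objective $\widetilde w(S)$ on $\widetilde G$ with the QP objective $x_S^\top W x_S$, where $x_S\in\{0,1\}^n$ is the indicator of a size-$k$ set $S$ and $\lambda_{\min},\lambda_{\max}$ abbreviate $\lambda_{\min}(W),\lambda_{\max}(W)$. The first thing I would write down is the exact bookkeeping identity created by the shift $\widetilde w_{ij}=w_{ij}+w_{ii}+w_{jj}$. Because every diagonal entry $w_{ii}$ with $i\in S$ is charged to each of the $k-1$ induced edges incident to $i$,
\[
\widetilde w(S)=\sum_{\{i,j\}\in E(S)}\widetilde w_{ij}=\tfrac12\sum_{\substack{i,j\in S\\ i\neq j}}w_{ij}+(k-1)\sum_{i\in S}w_{ii}=\tfrac12\,x_S^\top W x_S+\bigl(k-\tfrac32\bigr)\sum_{i\in S}w_{ii}.
\]
This identity is the bridge between the two objectives, and it is only meaningful as a D$k$S instance once $\widetilde w_{ij}\ge 0$, which is exactly the content of Lemma~4 under positive definiteness; I would invoke that first so that the D$k$S-Oracle may legitimately be called.

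Next I would convert positive definiteness into the numeric inputs. For any size-$k$ indicator one has $\|x_S\|_2^2=k$, so the Rayleigh quotient gives $k\lambda_{\min}\le x_S^\top W x_S\le k\lambda_{\max}$; likewise each diagonal entry is a Rayleigh quotient at a coordinate vector, hence $\lambda_{\min}\le w_{ii}\le\lambda_{\max}$ and $k\lambda_{\min}\le\sum_{i\in S}w_{ii}\le k\lambda_{\max}$. Plugging the two upper bounds into the identity and using the cancellation $\tfrac12+(k-\tfrac32)=k-1$ yields the clean estimate $\widetilde w(S)\le(k-1)k\lambda_{\max}$ for every feasible $S$, while the Rayleigh lower bound gives $x_S^\top W x_S\ge k\lambda_{\min}$. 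Applied to the set $S$ actually returned by the oracle, these combine into
\[
x_S^\top W x_S\ \ge\ k\lambda_{\min}\ \ge\ \frac{\lambda_{\min}}{(k-1)\lambda_{\max}}\,\widetilde w(S).
\]

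It then remains to chain this with the oracle's guarantee and with optimality. Writing $S^*$ for the QP optimum and $\widetilde S^*$ for the D$k$S optimum, I have $\widetilde w(S)\ge\alpha_{\text{D$k$S}}\,\widetilde w(\widetilde S^*)\ge\alpha_{\text{D$k$S}}\,\widetilde w(S^*)$, the second inequality by optimality of $\widetilde S^*$ for D$k$S. The final link is a lower bound on the D$k$S value $\widetilde w(S^*)$ of the QP optimum in terms of its QP value $x_{S^*}^\top W x_{S^*}$; here the same identity, together with the diagonal bound $\sum_{i\in S^*}w_{ii}\ge k\lambda_{\min}$ and $x_{S^*}^\top W x_{S^*}\le k\lambda_{\max}$, is what I would use, after which the factors multiply to the claimed ratio $\tfrac{1}{k-1}\tfrac{\lambda_{\min}}{\lambda_{\max}}\alpha_{\text{D$k$S}}$.

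The step I expect to fight with is precisely this last one, the passage between the two \emph{optima} rather than between a single set and itself. The obstruction is that the off-diagonal mass $\sum_{i\neq j\in S}w_{ij}$ is not sign-definite for a general positive definite $W$, so one cannot simply assert $\widetilde w(S^*)\ge x_{S^*}^\top W x_{S^*}$; the comparison must be routed through the diagonal term, which the shift inflates by the factor $k-1$, and through the eigenvalue ratio $\lambda_{\min}/\lambda_{\max}$. Keeping careful track so that the two eigenvalue conversions do not compound into $(\lambda_{\min}/\lambda_{\max})^2$ --- i.e.\ that the worst cases for the numerator set $S$ and the denominator set $S^*$ cannot occur simultaneously --- is the delicate accounting on which the stated constant hinges.
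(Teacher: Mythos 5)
Your architecture is the same as the paper's: nonnegativity of the shifted weights (the paper's Lemma~4), an eigenvalue-ratio bound between $\widetilde w(S)$ and the quadratic form (the paper's Lemma~6 --- your global estimates $\widetilde w(S)\le k(k-1)\lambda_{\max}$ and $x_S^\top Wx_S\ge k\lambda_{\min}$ are exactly its Case~(ii), applied without the case split), the oracle guarantee plus D$k$S-optimality, and finally a comparison of $\widetilde w(S^*)$ with $\mathrm{OPT}$ (the paper's Lemma~5). Your first three links are correct, and your bookkeeping identity $\widetilde w(S)=\tfrac12 x_S^\top Wx_S+(k-\tfrac32)\sum_{i\in S}w_{ii}$ is the honest version of what the paper writes. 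The gap is the last link, exactly where you predicted a fight: the chain needs $\widetilde w(S^*)\ge\mathrm{OPT}$, and that inequality is false in general. Take $k=n=2$ and $W=\bigl(\begin{smallmatrix}1&1/2\\1/2&1\end{smallmatrix}\bigr)$: then $\mathrm{OPT}=3$ but $\widetilde w(S^*)=2.5$. Your proposed repair does not restore it: plugging $\sum_{i\in S^*}w_{ii}\ge k\lambda_{\min}$ and $\mathrm{OPT}\le k\lambda_{\max}$ into the identity yields only $\widetilde w(S^*)\ge\bigl(\tfrac12+(k-\tfrac32)\tfrac{\lambda_{\min}}{\lambda_{\max}}\bigr)\mathrm{OPT}$, whose prefactor falls below $1$ whenever $\lambda_{\min}/\lambda_{\max}<1/(2k-3)$, so the three factors multiply to as little as half the claimed ratio --- the compounding you worried about does occur. (For calibration: the paper's Lemma~5 asserts precisely the inequality you need, but only under its convention that the QP value equals the unordered edge sum $\sum_{\{i,j\}\subseteq S}w_{ij}+\sum_{i\in S}w_{ii}$; under your correct ordered-pair accounting, the paper's own step carries the same factor-of-$2$ defect you identified, so you should not expect to recover the constant by mimicking it.)

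The good news is that your own ingredients already close the proof, just not through the chain. For the set $S$ returned by the oracle --- indeed for \emph{every} feasible $S$ --- the Rayleigh bounds give $x_S^\top Wx_S\ge k\lambda_{\min}=\tfrac{\lambda_{\min}}{\lambda_{\max}}\,k\lambda_{\max}\ge\tfrac{\lambda_{\min}}{\lambda_{\max}}\,\mathrm{OPT}\ge\tfrac{1}{k-1}\tfrac{\lambda_{\min}}{\lambda_{\max}}\,\alpha_{\mathrm{D}k\mathrm{S}}\,\mathrm{OPT}$, using only $\alpha_{\mathrm{D}k\mathrm{S}}\le 1$ and $k\ge 2$; this proves the theorem for the algorithm exactly as stated (and incidentally shows the stated guarantee is achieved by any feasible set, so the oracle plays no role in it). If instead you want the chain to genuinely transfer the oracle's quality, the clean fix is to redefine the reduction weights as $\widetilde w_{ij}=2w_{ij}+w_{ii}+w_{jj}=(e_i+e_j)^\top W(e_i+e_j)\ge 0$; then $\widetilde w(S)=x_S^\top Wx_S+(k-2)\sum_{i\in S}w_{ii}\ge x_S^\top Wx_S$ holds exactly, and your three links go through verbatim with the claimed constant.
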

Notice that we prove $\frac{\lambda_{\min}(\At)}{\lambda_{\max}(\At)}=O(1/k)$
for any round $t>n$ in our bandit algorithm (see Lemma~7 in Appendix~F).

  
    





\section{Main Algorithm}\label{sec:topk_bandit_algorithm}

Based on the approximation algorithm proposed in the previous section,
we propose two algorithms for 
the multiple-arm identification with full-bandit feedback.
Note that we assume that $k \geq 2$
since the multiple-arm identification with $k=1$
is the same as best arm identification problem of the MAB.


\paragraph{Static algorithm.}
We deal with {\em static} allocation strategies,
which sequentially sample a super-arm from a fixed sequence of super-arms.
In general, adaptive strategies will perform better than static ones, but due to the computational hardness, we focus on static ones to analyze the worst-case optimality~\cite{Soare2014}.
For static allocation strategies, where ${\bf x}_t$ is fixed beforehand,
Soare et al.~\cite{Soare2014}
provided the following proposition on the confidence ellipsoid for $\widehat{\theta}_t$.

\begin{proposition}[Soare et al.~\cite{Soare2014}, Proposition~1]
Let $\epsilon_t$ be a noise variable
bounded as $\epsilon_t \in [-\sigma, \sigma]$ for $\sigma >0$.
Let $c=2\sqrt{2}\sigma$ and $c'=6/\pi^2$ and fix $\delta \in (0,1)$.
Then, for any fixed sequence ${\bf x}_t$,
with probability at least $1-\delta$,
the inequality
\begin{align}\label{ineq:ellipsoid}
|x^{\top}\theta-x^{\top} \widehat{\theta}_t| \leq C_t \|x\|_{\Atin}
\end{align}
holds for all $t \in \{ 1,2,\ldots\}$
and $x \in \mathbb{R}^n$,
where $C_t =c\sqrt{\log(c't^2 K/\delta) }$.
\end{proposition}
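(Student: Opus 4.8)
The plan is to reduce the claim to a one-dimensional concentration inequality along a fixed direction and then take a union bound over rounds and super-arms. First I would write the estimation error in closed form. Since $r_{M_i}=\bm{\chi}_{M_i}^\top\theta+\epsilon_i$ with $\epsilon_i$ the aggregated zero-mean observation noise at round $i$, we have $b_{{\bf x}_t}=\sum_{i=1}^t\bm{\chi}_{M_i}(\bm{\chi}_{M_i}^\top\theta+\epsilon_i)=\At\theta+S_t$, where $S_t:=\sum_{i=1}^t\bm{\chi}_{M_i}\epsilon_i$. Hence $\thetahat-\theta=\Atin S_t$, and for any fixed $x\in\mathbb{R}^n$,
\[
x^\top\thetahat-x^\top\theta=x^\top\Atin S_t=\sum_{i=1}^t a_i\,\epsilon_i,\qquad a_i:=x^\top\Atin\bm{\chi}_{M_i}.
\]
Because the allocation is \emph{static}, the sequence ${\bf x}_t$, and therefore each coefficient $a_i$, is deterministic, so this is a weighted sum of independent, zero-mean random variables bounded in $[-\sigma,\sigma]$.

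The key identity is that the coefficients satisfy $\sum_{i=1}^t a_i^2=\|x\|_{\Atin}^2$: substituting $a_i=x^\top\Atin\bm{\chi}_{M_i}$ and using $\At=\sum_{i=1}^t\bm{\chi}_{M_i}\bm{\chi}_{M_i}^\top$ gives $\sum_i a_i^2=x^\top\Atin\At\Atin x=x^\top\Atin x$. With this in hand I would apply Hoeffding's inequality to $\sum_i a_i\epsilon_i$; since each $a_i\epsilon_i$ lies in an interval of width $2|a_i|\sigma$,
\[
\Pr\!\left[\,|x^\top\thetahat-x^\top\theta|\ge s\,\right]\le 2\exp\!\left(-\frac{s^2}{2\sigma^2\sum_i a_i^2}\right)=2\exp\!\left(-\frac{s^2}{2\sigma^2\|x\|_{\Atin}^2}\right).
\]

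Finally I would set $s=C_t\|x\|_{\Atin}$ with $C_t=c\sqrt{\log(c't^2K/\delta)}$, which turns the right-hand side into $2\,(c't^2K/\delta)^{-c^2/(2\sigma^2)}$; the choice $c=2\sqrt{2}\sigma$ makes the exponent equal to $c^2/(2\sigma^2)=4$. It then suffices to establish the bound at the super-arm indicator directions $x=\bm{\chi}_M$, $M\in\cM$, that the stopping rule actually evaluates; these number $K$, which is exactly the source of the $\log K$ term. Taking a union bound over these $K$ directions and over all rounds $t\in\{1,2,\dots\}$, and using $\sum_{t\ge1}1/t^2=\pi^2/6$ together with $c'=6/\pi^2$ (the $t^2$ inside the logarithm making the per-round contribution summable), keeps the accumulated failure probability at most $\delta$. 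Since the inequality is homogeneous of degree one in $x$, controlling these directions controls all of their scalar multiples.

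The main obstacle is obtaining a bound that is simultaneously valid over the infinite time horizon and over all relevant directions, rather than merely for one fixed $x$ and one fixed $t$. This is precisely where the static allocation is essential: keeping ${\bf x}_t$ (and hence $\At$) deterministic is what makes the error a sum of \emph{independent} bounded variables, so that Hoeffding applies with the clean variance proxy $\|x\|_{\Atin}^2$; the $\sum_t 1/t^2$ tail and the accompanying constant $c'$ then let a single confidence width $C_t$ govern every round at once. The only quantity I would re-verify carefully is the Hoeffding variance constant, to confirm that $c=2\sqrt{2}\sigma$ yields a per-event exponent (here $4$) large enough that the double union bound sums below $\delta$.
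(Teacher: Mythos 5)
First, a point of reference: the paper does not prove this proposition at all---it is imported verbatim from Soare et al.~\cite{Soare2014} and used as a black box---so there is no in-paper proof to compare against. Your reconstruction follows the same route as the original source: write $\widehat{\theta}_t-\theta=A_{{\bf x}_t}^{-1}S_t$ with $S_t=\sum_i \bm{\chi}_{M_i}\epsilon_i$, observe that a \emph{static} allocation makes the coefficients $a_i=x^\top A_{{\bf x}_t}^{-1}\bm{\chi}_{M_i}$ deterministic, verify $\sum_i a_i^2=\|x\|_{A_{{\bf x}_t}^{-1}}^2$, apply Hoeffding, and union bound over directions and rounds. Those computations (including the Hoeffding constant and the exponent $c^2/(2\sigma^2)=4$) are correct.

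The genuine gap is in which directions your union bound covers. You control only the $K$ indicator vectors $\bm{\chi}_M$ and, by homogeneity, their scalar multiples. That set is too small in two respects. First, it is smaller than what the paper actually needs downstream: in the proof of Theorem~3 (Appendix~F), Proposition~1 is invoked with $x=\bm{\chi}_{M^*}-\bm{\chi}_{\overline{M}}$, a difference of two indicator vectors, which is not a scalar multiple of any $\bm{\chi}_M$; your argument as written does not justify the inequality there. The repair is cheap: extend the union bound to the $O(K^2)$ difference vectors $\{\bm{\chi}_M-\bm{\chi}_{M'}\}$ in addition to the $K$ indicators; since your per-event failure probability is $2\bigl(\delta/(c't^2K)\bigr)^{4}$, the fourth-power exponent leaves ample slack for $K^2$ events with the stated constants. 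Second, no union or homogeneity argument can deliver the statement as literally written (``for all $x\in\mathbb{R}^n$''), because $\sup_{x\neq 0}|x^\top(\widehat{\theta}_t-\theta)|/\|x\|_{A_{{\bf x}_t}^{-1}}=\|\widehat{\theta}_t-\theta\|_{A_{{\bf x}_t}}$, which for worst-case bounded noise (e.g.\ $\pm\sigma$-valued) is of order $\sigma\sqrt{n}$ in expectation and thus exceeds $C_t\approx\sigma\sqrt{\log K+\log(t^2/\delta)}$ whenever $k\log n\ll n$. So the proposition is only true over a finite direction set---which is how Soare et al.\ state it---and your instinct to restrict to finitely many directions is the right one; you simply restricted to too few, and should flag that the ``$x\in\mathbb{R}^n$'' phrasing in the paper is an overstatement rather than something to be proved.
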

In our problem,
the proposition holds for $\sigma=kR$.
Two allocation strategies named as {\em G-allocation} and {\em ${\cal XY}$-allocation} are discussed in
Soare et al.~\cite{Soare2014}.
Approximating the optimal G-allocation can be done via convex optimization and efficient rounding procedure, and ${\cal XY}$-allocation can be computed in similar manner (see Appendix~D). 
In static algorithms,
the agent pulls a super-arm from a fixed set of super-arms
until a certain stopping condition is satisfied.
Therefore, it is important to construct a stopping 
condition guaranteeing
that the estimate $\thetahat$ belongs to a set of parameters that admits the empirical best super-arm $\widehat{M}^*_t$ as
an optimal super-arm $M^*$ as quickly as possible.
 \begin{algorithm}[t]
  \caption{Static allocation algorithm with approximate quadratic maximization ({\tt SAQM})}\label{alg:ECB}%
	\SetKwInOut{Input}{Input}
	\SetKwInOut{Output}{Output}
	\Input{ Accuracy $\epsilon>0$, confidence level $\delta \in (0,1)$, allocation strategy $p$}
    \For{$t =1, \ldots, n$}{
    $t \leftarrow t+1$ and pull $M_t \in {\rm supp}(p)$;
    
    Observe $r_{M_t}$, and update $A_t$ and $b_t$; 
    }
	\While{stopping condition~\eqref{stop_ecb} is not true}{
    $t \leftarrow t+1$;
    
    Pull $M_t \leftarrow \argmin_{M \in {\rm supp}(p)} \frac{T_M(t)}{p_M}$;
    
    Observe $r_{M_t}$, and update $A_t$ and $b_t$; 
     
    $\widehat{\theta}_t \leftarrow A_{\bf{x}_t}^{-1} b_t$;
    
    $\widehat{M}^*_t   \leftarrow  \argmax_{M \in {\cal M}} \widehat{\theta}_t(M)$;

    $M_t'  \leftarrow $ Quadratic Maximization$(\Atin)$;
    
    $Z_t \leftarrow  C_t\| \chiMprime \|_{\Atin}$;

    }
    
    {\Return $\widehat{M}^* \leftarrow \widehat{M}^*_t $}
\end{algorithm}
\paragraph{Proposed algorithm.}
Now we propose an algorithm named {\tt SAQM}, which is detailed in Algorithm~\ref{alg:ECB}.
Let $\mathcal{P}$ be a $K$-dimensional probability simplex.
We define an allocation strategy $p$ as $p =(p_M)_{M \in {\cal M}}\in {\cal P}$, where $p_M$ prescribes the proportions of pulls to super-arm $M$,
and let ${\rm supp}(p)=\{ M \in {\cal M} \colon p_M>0 \}$ be its support.
Let $T_M(t)$ be the number of times that $M$ is pulled before $(t+1)$-th round.
At each round $t$,
{\tt SAQM} samples a super-arm $M_t=\argmin_{M \in {\rm supp}(p)} T_M(t)/p_M$,
and updates statistics $\At, b_t$ and $\thetahat$.
Then, the algorithm computes the empirical best super-arm $\Mhatstar_t$,
and approximately solves CEM in~\eqref{prob:CEM},
using Algorithm~\ref{alg:QM} as a subroutine.
Note that any $\alpha$-approximation algorithm for QP is 
a $\sqrt{\alpha}$-approximation algorithm for CEM. 
{\tt SAQM} employs the following stopping condition:
\begin{alignat}{4}\label{stop_ecb} 
&\widehat{\theta}_t(\Mhatstar_t)-C_t \|\chiempbest \|_{\Atin} \notag\\
\geq &\max_{M \in {\cal M}\setminus
\{\widehat{M}^*_t\}}
\widehat{\theta}_t(M)+\frac{1}{\alpha_t}{C_tZ_{t}}-\varepsilon,
\end{alignat}
where $Z_t$ denotes the objective value of an approximate solution $M'_t$ for CEM,
and $\alpha_t$ denotes the approximation ratio of our algorithm for CEM at round $t$.
Note that we can compute the value of $\alpha_t$ 
using the guarantee in Theorem~\ref{thm:approx},
and this stopping condition allows the output to be $\varepsilon$-optimal with high probability
(see Lemma~8 in Appendix~F).
As the following theorem states, 
{\tt SAQM} provides an exponential speedup over exhaustive search algorithms.
\begin{theorem}\label{thm:poly}
Let ${\rm poly}(n)_{{\rm D}k {\rm S} }$ be the computation time of the D$k$S-Oracle.
Then, at any round $t>0$,
{\tt SAQM} (Algorithm~\ref{alg:ECB})  runs in $O( \max\{ n^2, {\rm poly}(n)_{{\rm D}k {\rm S} } \})$ time.
\end{theorem}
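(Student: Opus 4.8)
The plan is to bound the per-round running time of {\tt SAQM} by walking through every operation executed inside the main \texttt{while} loop and showing that each costs either $O(n^2)$ or a single call to the D$k$S-Oracle; summing over the constantly many operations per round then yields $O(\max\{n^2, \text{poly}(n)_{{\rm D}k{\rm S}}\})$. The rounds of the initialization \texttt{for} loop only pull a super-arm and update the statistics, so they cost $O(n^2)$ each and are dominated. The crucial point, which distinguishes this from exhaustive search, is that each maximization over the exponentially large class $\mathcal{M}$ is replaced by a polynomial-time computation: the \emph{linear} objectives are handled by top-$k$ selection, and the single \emph{quadratic} objective (CEM) is handled by Algorithm~\ref{alg:QM}.

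First I would handle the linear-algebra bookkeeping. Rather than recomputing $\Atin$ from scratch, I maintain it incrementally: each round adds the rank-one term $\chi_{M_t}\chi_{M_t}^{\top}$ to $\At$, so the Sherman--Morrison formula updates the inverse in $O(n^2)$ time. Given $\Atin$, computing $\thetahat=\Atin b_t$ is one matrix--vector product, i.e. $O(n^2)$, and each confidence term $\|\chi_M\|_{\Atin}=\sqrt{\chi_M^{\top}\Atin\chi_M}$ (appearing in $Z_t$ and in the stopping condition) is a quadratic form costing $O(n^2)$. The sampling step $\argmin_{M\in{\rm supp}(p)}T_M(t)/p_M$ iterates only over the support of the fixed design $p$; since the G- and $\mathcal{XY}$-allocations are rounded from an optimal design living in the $O(n^2)$-dimensional space of symmetric matrices $\sum_M p_M\chi_M\chi_M^{\top}$, Carath\'eodory's theorem gives $|{\rm supp}(p)|=O(n^2)$, so this step is also $O(n^2)$ once the counters $T_M(t)$ are maintained.

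Next I would address the two maximizations over $\mathcal{M}$, which is where the exponential size is defeated. The empirical best super-arm $\widehat{M}^*_t=\argmax_{M\in\mathcal{M}}\thetahat(M)$ maximizes the \emph{linear} function $\thetahat(M)=\sum_{e\in M}\thetahat(e)$ over size-$k$ subsets, so its optimum is exactly the $k$ coordinates of $\thetahat$ of largest value, obtainable by top-$k$ selection in $O(n)$ time. The same observation handles $\max_{M\in\mathcal{M}\setminus\{\widehat{M}^*_t\}}\thetahat(M)$ in the stopping condition: the best super-arm other than $\widehat{M}^*_t$ is obtained by swapping the smallest selected coordinate with the largest unselected one, again $O(n)$. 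The one genuinely quadratic maximization is CEM itself; its approximate solution $M'_t$ is produced by Quadratic Maximization$(\Atin)$, which by Algorithm~\ref{alg:QM} builds the weighted graph $\widetilde{G}$ in $O(n^2)$ time and invokes the D$k$S-Oracle once, costing $\text{poly}(n)_{{\rm D}k{\rm S}}$. The multiplier $1/\alpha_t$ is read off from Theorem~\ref{thm:approx}, whose only data-dependent quantity is the eigenvalue ratio $\lambda_{\min}(\At)/\lambda_{\max}(\At)$, computable in polynomial time (and, to keep the non-oracle cost at $O(n^2)$, replaceable by the analytic $O(1/k)$ bound noted after Theorem~\ref{thm:approx}).

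Collecting these, every operation in a round is $O(n^2)$ except the single oracle call, which gives the stated per-round bound. The main obstacle is not any individual calculation but the justification that the two argmax/max steps over $\mathcal{M}$ genuinely avoid enumerating the $K=\binom{n}{k}$ super-arms: the argument rests entirely on the linearity of $\thetahat(\cdot)$ together with the reduction of the lone nonlinear objective to the D$k$S, so the care goes into verifying that top-$k$ selection and a single coordinate swap indeed realize the global maxima of those two terms.
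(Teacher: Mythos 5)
Your proof is correct and follows essentially the same route as the paper's own (very terse) proof: per-round operation accounting with the $O(n^2)$ update of $A_{{\bf x}_t}^{-1}$, the $O(n)$ top-$k$ selection for the linear maximizations over $\mathcal{M}$, and a single call to Algorithm~\ref{alg:QM} (graph construction plus D$k$S-Oracle) for CEM. You additionally justify steps the paper glosses over --- the Sherman--Morrison update, the $O(n^2)$ bound on $|{\rm supp}(p)|$ via Carath\'eodory, the one-swap computation of the runner-up super-arm in the stopping condition, and the cost of obtaining $\alpha_t$ --- all of which are sound and strengthen, rather than alter, the paper's argument.
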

For example, 
if we employ the algorithm by
Asahiro et al.~\cite{Asahiro2000} as the D$k$S-Oracle in Algorithm~\ref{alg:QM},
the running time becomes $O(n^2)$.
If we employ the algorithm by
Feige, Peleg, and Kortsarz~\cite{Feige2001}, 
the running time of \texttt{SAQM} becomes $O(n^\omega)$,
where the exponent $\omega\le 2.373$ is equal to that of the computation time of matrix multiplication
(e.g., see 
\cite{LeGall14}).


Let $\Lambda_p= \sum_{M \in \cM}p_M\chiM \chiM^{\top}$ be a design matrix.
We define the problem complexity as $H_\varepsilon =\frac{\rho(p)}{(\Delta_{\min}+\varepsilon)^2}$,
where $\rho(p)= \maxM \| \chiM \|_{\Lambda^{-1}_{p}}^2$ and $\Delta_{\min}=\argmin_{M \in \cM \setminus \{ M^*\}} \theta(M^*)-\theta(M)$,
which is also appeared in Soare et al.~\cite{Soare2014}.
The next theorem shows that
{\tt SAQM} is $(\varepsilon ,\delta)$-PAC
and gives a problem-dependent sample complexity bound.
\begin{theorem} \label{thm:topk}
Given any instance of the multiple-arm identification with full-bandit feedback,
with probability at least $1-\delta$, 
{\tt SAQM} (Algorithm~\ref{alg:ECB}) returns an $\varepsilon$-optimal super-arm $\widehat{M}^*$
and the total number of samples $T$ is bounded as follows:
\begin{align*}
T =
O\left(
k^2\He \left(
n^{\frac{1}{4}}k^3 \log \left( \frac{n}{\delta}\right)
+ \log \left( 
n^{\frac{1}{8}} k^3 \He 
\left(
n^{\frac{1}{4}} k^3\He+
\log\left( \frac{n}{\delta}  \right)
\right)
\right)
\right)
\right).
\end{align*}



\end{theorem}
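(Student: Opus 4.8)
The plan is to condition throughout on the high-probability event $\cE$ on which the confidence bound~\eqref{ineq:ellipsoid} of Proposition~1 holds simultaneously for all rounds $t$ and all $x$; taking $\sigma=kR$ as prescribed, that proposition gives $\Pr[\cE]\ge 1-\delta$. The $(\varepsilon,\delta)$-PAC claim then reduces to Lemma~8: on $\cE$, whenever the stopping rule~\eqref{stop_ecb} fires, the returned $\Mhatstar$ is $\varepsilon$-optimal, so correctness is immediate and all remaining work is to upper bound the first round $T$ at which~\eqref{stop_ecb} becomes true on $\cE$.

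For the sample-complexity part I would first control the design matrix. The tracking rule $M_t\leftarrow\argmin_{M\in{\rm supp}(p)}T_M(t)/p_M$ over the efficiently-roundable support (of size $O(n)$; see Appendix~D) guarantees $\At\succeq(t-O(n))\Lambda_p$, so that beyond the initial $n$ forced-exploration rounds $\max_{M\in\cM}\|\chiM\|_{\Atin}^2\le(1+o(1))\rho(p)/t$. Next I would translate~\eqref{stop_ecb} into a width condition: on $\cE$, once $\Mhatstar_t=M^*$ the empirical gap $\thetahat(\Mhatstar_t)-\max_{M\neq\Mhatstar_t}\thetahat(M)$ is at least $\Delta_{\min}-2C_t\max_{M}\|\chiM\|_{\Atin}$, while the confidence correction on the right-hand side, built from $C_t\|\chiempbest\|_{\Atin}$ and $\tfrac{1}{\alpha_t}Z_t$ with $Z_t=C_t\|\chiMprime\|_{\Atin}$, is at most $\tfrac{2}{\alpha_t}C_t\max_{M\in\cM}\|\chiM\|_{\Atin}$ by feasibility of $M'_t$ and $\alpha_t\le1$. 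Hence~\eqref{stop_ecb} is implied by $\tfrac{4}{\alpha_t}C_t\sqrt{\rho(p)/t}\le\Delta_{\min}+\varepsilon$, i.e.\ by the self-referential inequality
\begin{equation}\label{plan:implicit}
t\;\ge\;\frac{16\,C_t^2}{\alpha_t^2}\,\He .
\end{equation}

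To make~\eqref{plan:implicit} explicit I would substitute the constants provided by the earlier results. Applying Theorem~\ref{thm:approx} with $W=\Atin$, and using $\lambda_{\min}(\Atin)/\lambda_{\max}(\Atin)=\lambda_{\min}(\At)/\lambda_{\max}(\At)=\Theta(1/k)$ from Lemma~7 (valid for $t>n$) together with the current-best ratio $\alpha_{\mathrm{D}k\mathrm{S}}=\Omega(n^{-1/4})$, gives a QP-ratio of order $1/(k^2n^{1/4})$ and hence a CEM-ratio $\alpha_t=\Omega(1/(kn^{1/8}))$, so $1/\alpha_t^2=O(k^2n^{1/4})$. Since $C_t^2=8k^2R^2\log(c't^2K/\delta)$ and $\log K=O(k\log n)$, the right-hand side of~\eqref{plan:implicit} takes the form $a+b\log t$ with a leading constant $a=O(k^5 n^{1/4}\He\log(n/\delta))$, which already reproduces the dominant term $k^2\He\cdot n^{1/4}k^3\log(n/\delta)$ of the statement.

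The final and, I expect, main obstacle is resolving the transcendental inequality~\eqref{plan:implicit}: because $C_t$ itself grows like $\sqrt{\log t}$, one cannot read off $T$ directly. I would apply a standard self-bounding lemma (if $t\le a+b\log t$ then $t=O(a+b\log(a+b))$), which converts the residual $b\log t$ contribution into precisely the nested-logarithm correction $\log\!\big(n^{1/8}k^3\He(n^{1/4}k^3\He+\log(n/\delta))\big)$ appearing in the theorem; adding back the $n$ exploration rounds and simplifying via $\log(K/\delta)=O(k\log(n/\delta))$ yields the claimed bound. The delicate bookkeeping is to keep the several $k$- and $n^{1/4}$-factors coming from the approximation ratio, the noise scale $\sigma=kR$, and $\log K$ correctly aligned, and to verify that $\Mhatstar_t=M^*$ indeed holds once the widths are small enough, so that the empirical-gap lower bound used in~\eqref{plan:implicit} is legitimate.
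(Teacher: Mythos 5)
Your proposal is correct and follows essentially the same route as the paper's proof: condition on the Proposition~1 event with $\sigma=kR$, invoke Lemma~8 for $\varepsilon$-optimality, reduce the stopping rule to the sufficient condition $t\gtrsim \alpha_t^{-2}C_t^2\He$ (the paper's $(3+1/\alpha_t)^2$ versus your $16/\alpha_t^2$ is immaterial), plug in $1/\alpha_t=O(kn^{1/8})$ from Theorem~1, Lemma~7, and the D$k$S ratio, and resolve the resulting self-referential inequality by the same fixed-point/self-bounding argument. The only cosmetic differences are that you track the design matrix as $\At\succeq(t-O(n))\Lambda_p$ where the paper idealizes $\Lambda_p=\At/t$, and you bound the empirical gap via two per-arm confidence bounds where the paper applies Proposition~1 once to the difference vector $\chiMstar-\chiMbar$; neither affects the bound.
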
 
It is worth mentioning that if we have an $\alpha$-approximation algorithm for CEM with a more general decision class ${\cal M}$ (such as paths, matchings, matroids),
we can extend
Theorem~\ref{thm:topk} for the combinatorial pure exploration (CPE) with general constraints
as follows. 
\begin{corollary}\label{col:general}
Given any instance of CPE with a decision class ${\cal M}$ in the full-bandit setting,
with probability at least $1- \delta$, 
{\tt SAQM} (Algorithm~\ref{alg:ECB})
with $\alpha$-approximation of CEM
returns
an
$\varepsilon$-optimal set $\widehat{M}^*$,
and the total number of samples $T$ is bounded as follows:
\[
T\leq  8\left(3+\frac{1}{\alpha}\right)^2 k^2 \He \log\left (   \frac{c'K}{\delta} \right)+C(H_\varepsilon, \delta),
\]
where 
$C(\He, \delta)=O\left(k^2 \He \log \left(  \frac{k}{\alpha} \He \left(   \frac{k^2}{\alpha^2}\He +\log\left( \frac{K}{\delta}\right) \right)  \right)\right)$.
\end{corollary}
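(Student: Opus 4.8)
The plan is to prove the two assertions of Corollary~\ref{col:general} on a single high-probability event, following the proof of Theorem~\ref{thm:topk} but keeping the approximation ratio $\alpha$ symbolic so that no step uses the particular top-$k$ structure of $\cM$. Let $\cE$ be the event on which the confidence bound~\eqref{ineq:ellipsoid} holds simultaneously for all rounds $t$ and all $x\in\mathbb{R}^n$; by the confidence-ellipsoid proposition with $\sigma=kR$ (so that $c=2\sqrt{2}kR$ and $C_t=c\sqrt{\log(c't^2K/\delta)}$) this event has probability at least $1-\delta$. Everything below is argued on $\cE$, and I normalize $R=1$, so $C_t^2=8k^2\log(c't^2K/\delta)$.

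For correctness I would reuse the argument of Lemma~8. Let $Z_t=C_t\|\chiMprime\|_{\Atin}$ be the value returned by the $\alpha$-approximate CEM oracle and set $W_t\define C_t\maxM\|\chiM\|_{\Atin}$. Since $M'_t$ is an $\alpha$-approximate maximizer of $\|\chiM\|_{\Atin}$, we have $\alpha W_t\le Z_t\le W_t$, so the approximation-corrected term $\tfrac{1}{\alpha}Z_t$ in the stopping rule~\eqref{stop_ecb} dominates the single-super-arm width $C_t\|\chiM\|_{\Atin}$ for every $M\in\cM$. Substituting~\eqref{ineq:ellipsoid} into both sides of~\eqref{stop_ecb} therefore turns the empirical comparison into a comparison of true means and gives $\theta(M^*)\ge\theta(\widehat{M}^*_t)\ge\theta(M)-\varepsilon$ for all $M$, i.e.\ $\varepsilon$-optimality of $\widehat{M}^*$. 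This uses only the oracle guarantee and~\eqref{ineq:ellipsoid}, so it transfers verbatim to an arbitrary decision class.

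For the sample-complexity bound I would first control $W_t$ and then bound the stopping time. (1)~The balancing rule $M_t=\argmin_{M}T_M(t)/p_M$ together with the length-$n$ initialization yields the tracking guarantee $T_M(t)\ge tp_M-O(1)$ for every $M\in\mathrm{supp}(p)$ and $t>n$, whence $\At\succeq(t-O(1))\Lambda_p$ and $\maxM\|\chiM\|_{\Atin}^2\le\rho(p)/(t-O(1))$; substituting $C_t^2$ gives $W_t^2\le 8k^2\rho(p)\log(c't^2K/\delta)/(t-O(1))$. (2)~Writing $M^\dagger$ for the empirical second-best super-arm, applying~\eqref{ineq:ellipsoid} to both sides of~\eqref{stop_ecb} shows that the rule fires whenever $(3+\tfrac1\alpha)W_t\le\theta(M^*)-\theta(M^\dagger)+\varepsilon$. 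When $W_t<\Delta_{\min}/2$ the empirical best equals $M^*$, so $M^\dagger\neq M^*$ and the gap is at least $\Delta_{\min}$, yielding the firing threshold $(3+\tfrac1\alpha)W_t\le\Delta_{\min}+\varepsilon$; in the complementary regime $\theta(M^*)-\theta(M^\dagger)\ge0$ already forces firing once $(3+\tfrac1\alpha)W_t\le\varepsilon$, and a short comparison of the two conditions shows firing is guaranteed once $W_t\le\tfrac{\Delta_{\min}+\varepsilon}{3+1/\alpha}$ in the regime $\varepsilon\lesssim\Delta_{\min}$ underlying the stated constant. (3)~Combining (1) and (2) with $\He=\rho(p)/(\Delta_{\min}+\varepsilon)^2$ turns the firing threshold into the transcendental inequality $t\gtrsim 8(3+\tfrac1\alpha)^2k^2\He\log(c't^2K/\delta)$; splitting $\log(c't^2K/\delta)=\log(c'K/\delta)+2\log t$ isolates the leading term $8(3+\tfrac1\alpha)^2k^2\He\log(c'K/\delta)$, and resolving the residual $t\gtrsim A\log t$ with $A=O(k^2\He/\alpha^2)$ via the standard estimate $t=O(A\log A)$ produces the correction $C(\He,\delta)$ in the stated form.

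The main work is constant-tracking rather than any conceptual leap: in step~(3) I must carry the additive $O(1)$ slack from the tracking rule, the length-$n$ initialization, and the $2\log t$ term precisely enough that the leading constant $8(3+1/\alpha)^2$ and the exact argument of the logarithm in $C(\He,\delta)$ emerge as written. The one genuinely delicate point is step~(2): the firing threshold $\tfrac{\Delta_{\min}+\varepsilon}{3+1/\alpha}$ is only directly valid once $W_t<\Delta_{\min}/2$ forces $\widehat{M}^*_t=M^*$, so the regime $\varepsilon\gtrsim\Delta_{\min}$ must be handled through the $\varepsilon$-slack alone, and one has to check that this complementary bound is still absorbed by the stated constant and the $C(\He,\delta)$ term. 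Generality over $\cM$ enters only through $\rho(p)$ and $K=|\cM|$, so no new combinatorial ingredient beyond the $\alpha$-approximate CEM oracle is required.
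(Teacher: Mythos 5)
Your proposal shares the paper's skeleton---the same confidence event from Proposition~1 with $\sigma=kR$, the same correctness argument as the paper's Lemma~8 (using $\alpha\cdot\mathrm{OPT}\le Z_t\le \mathrm{OPT}$ so that $\frac{1}{\alpha}Z_t$ dominates every width $C_t\normA$), and the same resolution of the transcendental inequality $\tau\le\left(3+1/\alpha\right)^2C_\tau^2\He$---but it diverges at the key step, the sufficient stopping condition, and there it has a real gap. The paper never conditions on the identity of the empirical best: its sufficient condition \eqref{stopForOpt_ecb} is the stopping rule \eqref{stop_ecb} with $M^*$ substituted for $\widehat{M}^*_t$; it bounds $\normAstar$ and $\frac{1}{\alpha}Z_t$ by multiples of $C_t\sqrt{\rho(p)/t}$ and then converts the empirical gap into the true gap by applying Proposition~1 to $x=\chiMstar-\bm{\chi}_{\overline{M}}$, which yields the single uniform threshold $\Delta_{\min}+\varepsilon \ge \left(3+1/\alpha_t\right)C_t\sqrt{\rho(p)/t}$, i.e.\ $t\ge\left(3+1/\alpha_t\right)^2C_t^2\He$, with $\Delta_{\min}$ and $\varepsilon$ entering additively in every parameter regime. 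That is exactly what produces the leading constant $8\left(3+1/\alpha\right)^2$.

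Your case analysis cannot reproduce that constant when $\varepsilon\gtrsim\Delta_{\min}$. In that regime your threshold $W_t\le(\Delta_{\min}+\varepsilon)/(3+1/\alpha)$ no longer implies $W_t<\Delta_{\min}/2$, so you cannot force $\widehat{M}^*_t=M^*$, and the only firing condition you can guarantee is $(3+1/\alpha)W_t\le\varepsilon$ (the case $M^\dagger=M^*$). Since $\varepsilon\ge(\Delta_{\min}+\varepsilon)/2$ there, this is weaker by a factor of up to $2$ in $W_t$, hence up to $4$ in $t$, so your leading term can be as large as $32\left(3+1/\alpha\right)^2k^2\He\log\left(c'K/\delta\right)$. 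You suggest this loss might be ``absorbed by the stated constant and the $C(\He,\delta)$ term,'' but it cannot be: as $\delta\to0$ with the other parameters fixed, $C(\He,\delta)=O\left(k^2\He\log\left(\frac{k}{\alpha}\He\left(\frac{k^2}{\alpha^2}\He+\log\left(K/\delta\right)\right)\right)\right)$ grows only like $k^2\He\log\log(K/\delta)$, so it cannot swallow an extra constant multiple of $k^2\He\log(K/\delta)$. As written, then, your argument establishes the corollary only in the regime $\varepsilon\lesssim\Delta_{\min}$; to obtain the stated bound uniformly you need the paper's device of plugging $M^*$ itself into the stopping rule, which removes any dependence on which super-arm is empirically best.
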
 
Theorem~\ref{thm:topk} corresponds to the case where $\alpha=O(1/kn^{\frac{1}{8}})$ in Corollary~\ref{col:general}.
Soare et al.~\cite{Soare2014} considered the {\em oracle sample complexity} of a linear best-arm identification problem.
The oracle complexity,
which is based on the optimal allocation strategy $p$ derived from the true parameter $\theta$,
is $O(\rho(p)\log(1/\delta))$.
Soare et al.~\cite{Soare2014}
showed that the sample complexity with G-allocation strategy matches 
the oracle sample complexity 
up to constants in the worst case.
The sample complexity of {\tt SAQM} is also worst-case optimal in the sense that it matches $O(\rho(p)\log(1/\delta))$,
while {\tt SAQM} runs in polynomial time.
\begin{algorithm}[t]
\caption{Static allocation algorithm with first-order approximation ({\tt SA-FOA)}}

	\label{alg:ECB2}
	\SetKwInOut{Input}{Input}
	\SetKwInOut{Output}{Output}
	\Input{ Accuracy $\epsilon>0$, confidence level $\delta \in (0,1)$, allocation strategy $p$}
    \For{$t =1, \ldots, n$}{
    $t \leftarrow t+1$ and pull $M_t \in {\rm supp}(p)$;
    
    Observe $r_{M_t}$, and update $A_t$ and $b_t$; 
    }
	\While{$\frac{\varepsilon}{2} \geq Z'_t-\widehat{\theta}_t(\Mhatstar_t)$  is not true}{
    $t \leftarrow t+1$;
    
    Pull $M_t \leftarrow \argmin_{M \in {\rm supp}(p)} \frac{T_M(t)}{p_M}$;
    
    Observe $r_{M_t}$, and then update $A_t$, $b_t$ and $\widehat{\theta}_t \leftarrow A_{\bf{x}_t}^{-1} b_t$; 
    
    
    $\widehat{M}^*_t   \leftarrow  \argmax_{M \in {\cal M}} \widehat{\theta}_t(M)$;

    $m \leftarrow \ell n$ for some positive integer $\ell$; 
    
    \For{$i=1, \ldots,m$}{
    $F \leftarrow \{\emptyset\}$;
    
    
    Choose a super arm $M_i \in {\rm supp}(p)$ at uniformly random;
    
    $\gamma \leftarrow \frac{C_t}{2 \|\chiMbar-\chiMhatstar  \|_{\Atin}}$;
    
    $B_t \leftarrow \gamma\Atin-{\rm Diag}(2\gamma(\Atin\chiMhatstar))+{\rm Diag}(\thetahat)$;
    


    
    $F \leftarrow F \cup \{  {\rm Quadratic Maximization}(B_t)\}$;

    }
    

    $Z'_t \leftarrow \underset{M \in  F}{\max}\left( \thetahat(M) +C_t\|\chiM-\chiMhatstar\|_{\Atin}\right)$;
}
    
    \Return{$\widehat{M}^*\leftarrow \widehat{M}^*_t$}
\end{algorithm}

\paragraph{Heuristic algorithm.}
In {\tt SAQM},
we compute an upper confidence bound of the expected reward of each super-arm.
However,
in order to reduce the number of required samples,
we wish to directly construct
a tight confidence bound
for the gap of the reward between two super-arms.
For this reason,
we propose another algorithm {\tt SA-FOA}.
The procedure of {\tt SA-FOA} is shown in Algorithm~\ref{alg:ECB2}.
Given an allocation strategy $p$,
this algorithm continues sampling
until the stopping condition $\frac{\varepsilon}{2} \geq Z'_t-\widehat{\theta}_t(\Mhatstar_t)$ is satisfied,
where $Z'_t$ denotes the objective value of an approximate solution of the following maximization problem:
\begin{align}\label{prob:FOA}
\max_{M\in { \cal M}\setminus \{\widehat{M}^*_t\}} \left(\widehat{\theta}_t(M)+C_t \|\chiM-\chiMhatstar \|_{\Atin} \right).
\end{align}
The second term of \eqref{prob:FOA} can be regarded as the
confidence interval of the estimated gap $\thetahat(M)-\thetahat(\Mhatstar_t)$.
We employ a first-order approximation technique,
in order to simultaneously maximize the estimated reward  $\widehat{\theta}_t(M)$
and the matrix norm $\|\chiM-\chiMhatstar \|_{\Atin}$. 
For a fixed super-arm $M_i$,
we approximate $\|\chiM-\chiMhatstar\|_{\Atin}$ using the following bound: 
\begin{alignat*}{4}
&\|\chiM-\chiMhatstar\|_{\Atin}  
&\leq \frac{\|\chiM-\chiMhatstar\|_{\Atin}^2 }{2\|\chiMbar-\chiMhatstar\|_{\Atin} }
+\frac{\|\chiMbar-\chiMhatstar\|_{\Atin}}{2},
\end{alignat*}
which follows from $\sqrt{a+x} \leq \sqrt{a}+\frac{x}{2\sqrt{a}}$ for any $a,x> 0$.
For any $y \in \mathbb{R}^n$, let ${\rm Diag}(y)$ be a matrix whose  $i$-th diagonal component is $y(i)$ for $i \in [n]$.
The above first-order approximation allows us
to transform
the original problem
to QP,
where the objective function is
\[
\chiM^{\top}\left(\gamma\Atin-{\rm Diag}(2\gamma(\Atin\chiMhatstar))+{\rm Diag}(\thetahat) \right)\chiM, \ \text{with a positive constant } \gamma=\frac{C_t}{2\|\chiMbar-\chiMhatstar \|_{\Atin}}.
\] 
We can approximately solve it by Algorithm~\ref{alg:QM},
and choose the best approximate solution
that maximizes the original objective among $\ell n$ super arms.
Notice that {\tt SA-FOA} is an 
$(\epsilon, \delta)$-PAC algorithm
since we compute the upper bound of the objective function in~\eqref{prob:FOA} and thus it will not stop earlier.
In our experiments,
it works well although we have no theoretical results on the sample complexity.
We will also observe in the experiments that
the approximation error of {\tt SA-FOA} for \eqref{prob:FOA}
becomes smaller as the number of rounds increases.

\section{Experiments}\label{sec:experiments}

\begin{figure}[t!]
 \begin{minipage}{0.5\hsize}
  \begin{center}
\scalebox{0.95}{
   \includegraphics[width=70mm]{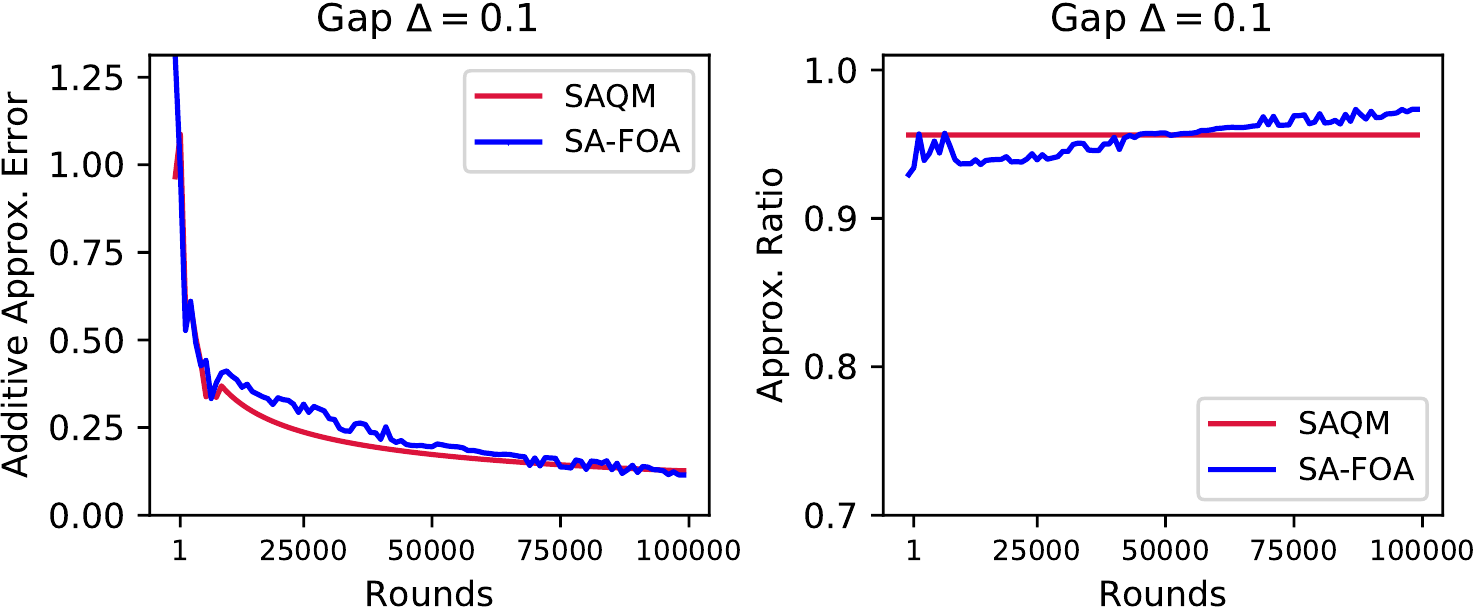}
   }
  \end{center}
  \label{fig:one}
 \end{minipage}
 \begin{minipage}{0.5\hsize}
  \begin{center}
\scalebox{0.95}{
   \includegraphics[width=70mm]{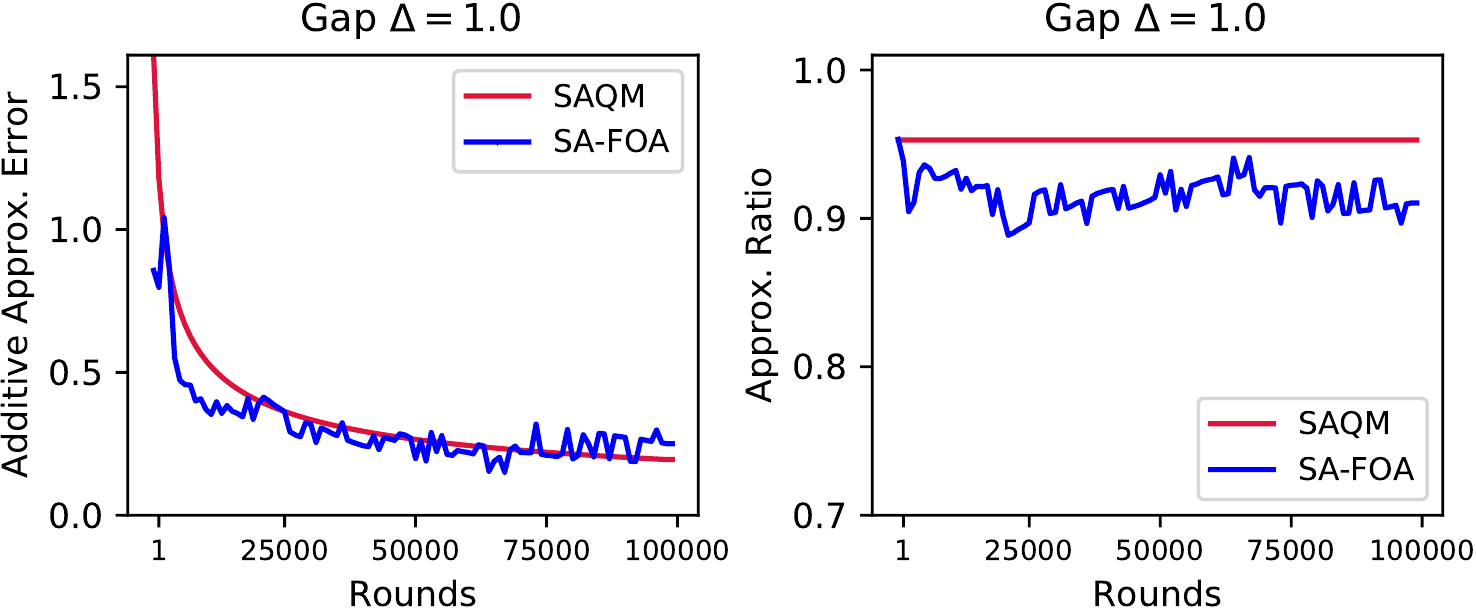}
   }
  \end{center}
  \label{fig:two}
 \end{minipage}
  \caption{Approximation precision for synthetic datasets with $(n,k)=(10,5)$. Each point corresponds to an average over 10 realizations.}\label{fig:approx}
\end{figure}

\begin{figure}[t!]

 \begin{minipage}{0.48\hsize}
  \begin{center}
 \scalebox{0.90}{
   \includegraphics[width=48mm]{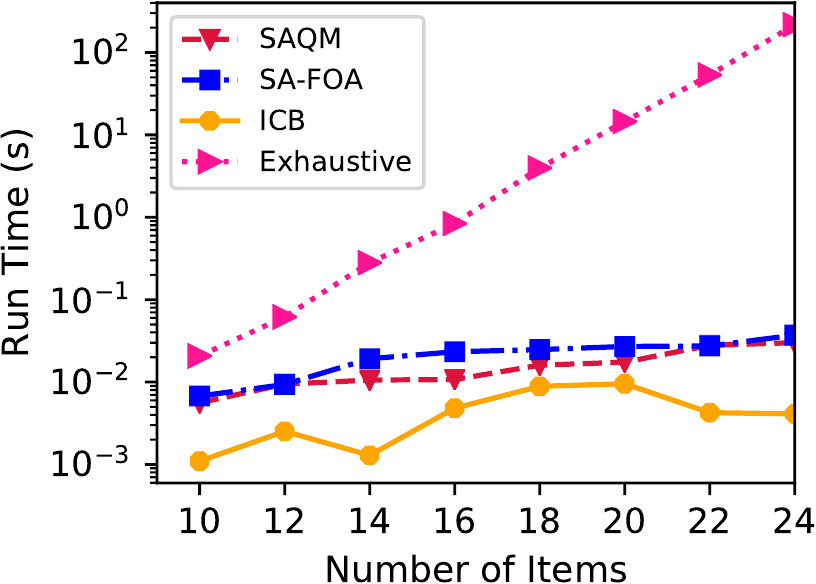}
  }
  \end{center}
    \caption{Run time in each round for synthetic datasets.
   Each point is an average over 10 realizations.}
  \label{fig:runtime}
 \end{minipage}
  \hfill
  \begin{minipage}{0.48\hsize}
  \begin{center}
 \scalebox{0.90}{
   \includegraphics[width=48mm]{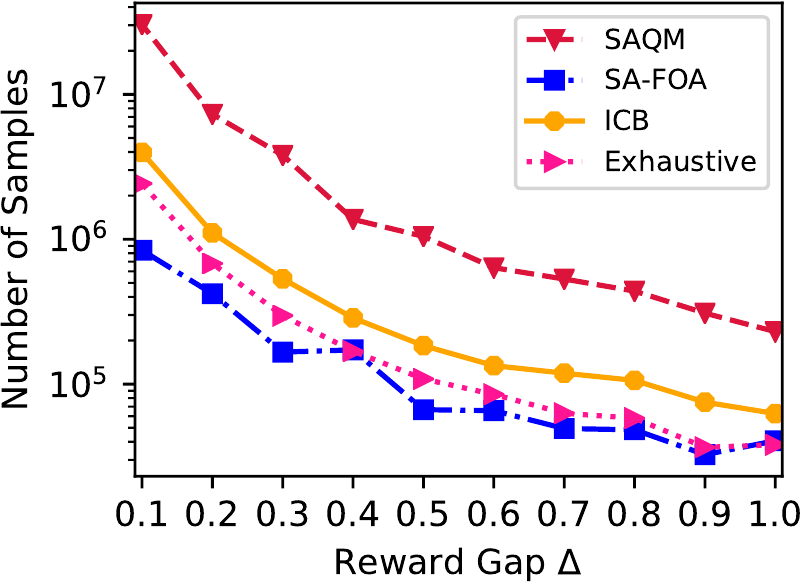}
  }
  \end{center}
    \caption{Number of samples for synthetic datasets with $(n,k)=(10,5)$.
   Each point is an average over 10 realizations.}
\label{fig:sample}
 \end{minipage}
\end{figure}


In this section, we evaluate the empirical performance of our algorithms, namely {\tt SAQM} (Algorithm~\ref{alg:ECB}) and
{\tt SA-FOA} (Algorithm~\ref{alg:ECB2}).
We also implement another algorithm namely {\tt ICB} (Algorithm~4 in Appendix~A) as a naive algorithm that works in polynomial-time. {\tt ICB} employs simplified confidence bounds obtained by diagonal approximation of confidence ellipsoids. 
Note that {\tt ICB} can solve the combinatorial pure exploration problem with general constraints and results in another sample complexity (see Lemma~3 in Appendix~A).
We compare our algorithms with an exhaustive search algorithm namely {\tt Exhaustive}, which runs in exponential time
(see Appendix~E for details).
We conduct the experiments on small synthetic datasets and large-scale real-world datasets.

\paragraph{Synthetic datasets.}


To see the dependence of the performance on the minimum gap $\Delta_{\min}$,
we generate synthetic instances as follows.
We first set the expected rewards
for the top-$k$ single-arms uniformly at random from $[0,1]$.
Let $\theta_{{\min}\text{-}k}$
be the the minimum expected reward
in the top-$k$ single-arms.
We set the expected reward of
the $(k+1)$-th best single-arm to
$\theta_{{\min}\text{-}k}-\Delta_{\min}$
for the predetermined parameter $\Delta_{\min} \in [0, 1]$.
Then, 
we generate the expected rewards of
the rest of single-arms by uniform samples from $[-1, \theta_{{\min}\text{-}k}-\Delta_{\min}]$
so that expected rewards of the best super-arm is
larger than those of the rest of super-arms by at least $\Delta_{\min}$.
We set the additive noise distribution ${\cal N}(0,1)$
and $\delta=0.05$.
All algorithms employ G-allocation strategy.

First,
we examine the approximation precision of our approximation algorithms.
The results are reported in Figure~\ref{fig:approx}.
{\tt SAQM} and {\tt SA-FOA} employ some approximation mechanisms to test the stopping condition in polynomial time.
Recall that {\tt SAQM} approximately solves CEM in~\eqref{prob:CEM}
to attain an objective value of $Z_t$,
and 
{\tt SA-FOA} approximately solves the maximization problem in~\eqref{prob:FOA}
to attain an objective value of $Z'_t$.
We set up the experiments with $n=10$ single-arms and $k=5$.
We run the experiments for the small gap ($\Delta_{\min} = 0.1$)
and large gap ($\Delta_{\min} = 1.0$).
We plot the approximation ratio and the additive approximation error of {\tt SAQM} and {\tt SA-FOA} in the first 100,000 rounds.
From the results,
we can see that the approximation ratios
of them are almost always greater than $0.9$, 
which are far better than the worst-case guarantee proved in Theorem~\ref{thm:approx}.
In particular,
the approximation ratio of {\tt SA-FOA}
in the small gap case
is surprisingly good (around 0.95)
and grows as the number of rounds increases.
This result implies that
there is only a slight
increase of the sample complexity
caused by the approximation,
especially
when the expected rewards of single-arms
are close to each other.

\begin{table}[t!]
  \begin{minipage}[t]{.50\textwidth}
    \caption{Real-world datasets on crowdsourcing. ``Average'' and ``Best'' give the average and the best accuracy rate among the workers, respectively.}\label{tab:instance}
    \begin{center}
\scalebox{0.80}{
\begin{tabular}{l*{4}{@{\hspace{3mm}}r}}
\toprule
Dataset     & $\#$task & $\#$worker &  \text{Average} &  \text{Best}\\
\midrule
{\em IT         } & 25        & 36        & 0.54   & 0.84 \\
{\em Medicine   } & 36        & 45        & 0.48  & 0.92 \\
{\em Chinese    } & 24        & 50        & 0.37   & 0.79 \\
{\em Pok\'{e}mon} & 20        & 55        & 0.28  & 1.00 \\
{\em English    } & 30        & 63        & 0.26   & 0.70 \\
{\em Science    } & 20        & 111       & 0.29  & 0.85 \\ 
\bottomrule
\end{tabular}   
}
   \end{center}
  \end{minipage}
  \hspace{1.0cm}
  \begin{minipage}[t]{.41\textwidth}
\caption{Number of samples ($\times 10^3$) on real-world crowdsourcing datasets (average over 5 realizations).}\label{fig:realdata}
    \begin{center}
    \scalebox{0.80}{
\begin{tabular}{l*{3}{@{\hspace{3mm}}r}}
\toprule
Dataset    & \texttt{ICB} & \texttt{SAQM} &  \texttt{SA-FOA} \\
\midrule
{\em IT         } &  46,658    & 68,328    & 3,421\\
{\em Medicine   } &  73,337     & 86,252    & 3,493 \\
{\em Chinese    } &  105,214     & 110,504     & 4,949\\
{\em Pok\'{e}mon} &  20,943   & 91,423    & 3,050\\
{\em English    } &  118,587     & 131,512    & 9,313  \\
{\em Science    } & 362,558     & 291,773     & 15,611 \\ 
\bottomrule
\end{tabular}  
}
   \end{center}
  \end{minipage}
\end{table}

Next, we conduct the experiments to compare the running time of algorithms.
We set $n=10,12,\ldots,24$ and $k=n/2$ on synthetic datasets.
We report the results in Figure~\ref{fig:runtime}.
As can be seen,
{\tt Exhaustive} is prohibitive on instances with large number of super-arms,
while our algorithms can run fast even if $n$ becomes larger,
which matches our theoretical analysis.
The results indicate that polynomial-time algorithms are of crucial importance for practical use.

Finally, we evaluate the number of samples required to identify the best super-arm
for varying $\Delta_{\min}$.
Based on the above observation, we set $\alpha=0.9$.
The result is shown in Figure~\ref{fig:sample},
which 
indicates that 
the numbers of samples of our algorithms are comparable to that of {\tt Exhaustive}.
We observed that our algorithms always output the optimal super-arm.






\paragraph{Real-world datasets on crowdsourcing.}

We use the crowdsourcing datasets
compiled by 
Li et al.~\cite{Li2017}
whose basic information is shown in Table~\ref{tab:instance}.
The task is to identify the top-$k$ workers with the highest accuracy
only from a sequential access to the accuracy of part of labels given by some workers.
Notice that
the number of super-arms is more than $10^{10}$ and $ \Delta_{\min}$ is less than $0.1$ in all experiments.
We set 
$k=10$ and  $\varepsilon=0.5$.
Since {\tt Exhaustive} is prohibitive,
we compare other three algorithms.
All algorithms employ uniform allocation strategy.
The result is shown in
Table
\ref{fig:realdata},
which indicates
the applicability of our algorithms to the instances with a massive number of super-arms.
Moreover,
all three algorithms found the optimal subset of crowdworkers.
In all datasets, {\tt SA-FOA} outperformed the other algorithms.
{\tt ICB} also worked well,
but it became worse especially for {\em Science} in which the number of workers is more than $100$.
This result implies that when the number of workers (single-arms) is large,
the algorithm with simplified confidence bounds may degrate the sample complexity,
while algorithms with confidence ellipsoids require less samples as {\tt SAQM} and {\tt SA-FOA} perform well
(see Appendix~A for more discussion).


\section{Conclusion}
We studied the multiple-arm identification with full-bandit feedback,
where we cannot observe a reward of each single-arm, but only the sum of the rewards.
To overcome the computational challenges,
we designed a novel approximation algorithm
for a 0-1 quadratic programming problem with theoretical guarantee. 
Based on our approximation algorithm,
we proposed a polynomial-time algorithm {\tt SAQM} that runs in $O(\log K)$ time
and provided an upper bound of the sample complexity,
which is still worst-case optimal;
the result indicates that our algorithm provided an exponential speedup over exhaustive search algorithm while keeping the statistical efficiency.
We also designed a novel algorithm {\tt SA-FOA} using first-order approximation that empirically performs well.
Finally,
we conducted experiments on synthetic and real-world datasets with more than $10^{10}$ super-arms, demonstrating the superiority of our algorithms in terms of both the computation time and the sample complexity.
There are several directions for future research.
It remains open to design adaptive algorithms with a problem-dependent optimal sample complexity. It is also interesting question to seek a lower bound of any $(\epsilon, \delta)$-PAC algorithm that works in polynomial-time. Extension for combinatorial pure exploration with full-bandit feedback is another direction.

\clearpage
\medskip

\bibliographystyle{abbrv}
\bibliography{mybib}

\appendix
\setcounter{theorem}{1}
\setcounter{lemma}{0}
\setcounter{corollary}{0}
 \setcounter{equation}{0}
\setcounter{proposition}{0}
\abovedisplayskip=7pt
\belowdisplayskip=7pt

\section{Simplified Confidence Bounds for the Combinatorial Pure Exploration}\label{apx:cpe}

In this appendix,
we see the fundamental observation of employing a simplified confidence bound
to obtain a computational efficient algorithm for the combinatorial pure exploration problem.
We consider any decision class ${\cal M}$,
in which super-arms satisfy any constraint where a linear maximization problem is polynomial-time solvable.
The examples of decision class considered here are paths, matchings, or matroids (see Appendix~B for the definition of matroids).
The purpose of this appendix is to give a polynomial-time algorithm for solving the combinatorial pure exploration with general constraints
by using the simplified confidence bound,
and see the trade-off
between the statistical efficiency and computational efficiency.
The $(\epsilon, \delta)$-PAC algorithm proposed in this section, named {\tt ICB},
is also evaluated in our experiments.

For a matrix $B \in \mathbb{R}^{n \times n}$, let $B(i,j)$ denote the $(i,j)$-th entry of $B$. We construct a simplified confidence bound, named a {\em independent confidence bound},
which is obtained by diagonal approximation of confidence ellipsoids.
We start with the following lemma,
which shows that
$\theta$ lies 
in an independent confidence region
centered at $\thetahat$ with high-probability.
\begin{lemma}\label{thm:independent}
Let  $c'=6/\pi^2$.
Let $\epsilon_t$ be a noise variable
bounded as $\epsilon_t \in [-\sigma, \sigma]$ for $\sigma >0$.
Then,
for any fixed sequence ${\bf x}_t$,
any $t \in \{ 1,2,\ldots\}$,
and $\delta \in (0, 1)$,
with probability at least $1-\delta$,
the inequality
\begin{align}\label{ineq:independent}
|x^{\top}\theta-x^{\top} \widehat{\theta}_t| \leq C_t \sum_{i=1}^n |x_i|\sqrt{\Atin(i,i)}
\end{align}
holds for all $x \in \{-1, 0, 1\}^n$, 
where
\begin{align*}
C_t =\sigma\sqrt{2\log(c't^2 n/\delta) }.
\end{align*}
\end{lemma}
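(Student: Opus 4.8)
The plan is to reduce the stated bound to a coordinate-by-coordinate concentration argument and then recombine via the triangle inequality. First I would write the estimation error explicitly. Since each reward satisfies $r_{M_i}=\chii^{\top}\theta+\epsilon_i$ with $\epsilon_i$ the zero-mean, $[-\sigma,\sigma]$-bounded round-$i$ noise, we have $b_{{\bf x}_t}=\At\theta+\sum_{i=1}^t\chii\epsilon_i$, so that
\[
\thetahat-\theta=\Atin\sum_{i=1}^t\chii\epsilon_i .
\]
Fixing a coordinate $j\in[n]$ and writing $c_{ij}\define e_j^{\top}\Atin\chii$ (with $e_j$ the $j$-th standard basis vector), the $j$-th component of the error is $\thetahat(j)-\theta(j)=\sum_{i=1}^t c_{ij}\epsilon_i$, a weighted sum of the independent, zero-mean, bounded noises $\epsilon_1,\dots,\epsilon_t$. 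The crucial point is that, because the sequence ${\bf x}_t$ is fixed beforehand, the weights $c_{ij}$ are deterministic, so the $\epsilon_i$ remain independent of them; this is exactly where the static-allocation assumption enters, and it is what lets us use a plain Hoeffding bound rather than a martingale one.

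Next I would match the variance proxy of this weighted sum to the diagonal entry of $\Atin$. A direct computation gives
\[
\sum_{i=1}^t c_{ij}^2 = e_j^{\top}\Atin\Big(\sum_{i=1}^t\chii\chii^{\top}\Big)\Atin e_j = e_j^{\top}\Atin\At\Atin e_j = \Atin(j,j),
\]
using $\At=\sum_i\chii\chii^{\top}$ and $\Atin\At\Atin=\Atin$. Hoeffding's inequality for the independent bounded terms $c_{ij}\epsilon_i$ then yields, for any $u>0$,
\[
\Prob\!\left[\,|\thetahat(j)-\theta(j)|\ge u\,\right]\le 2\exp\!\left(-\frac{u^2}{2\sigma^2\,\Atin(j,j)}\right).
\]
Choosing the threshold $u=C_t\sqrt{\Atin(j,j)}$ with $C_t=\sigma\sqrt{2\log(c't^2n/\delta)}$ cancels the diagonal entry in the exponent, making the right-hand side equal to $2\delta/(c't^2n)$.

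Finally I would take a union bound over the $n$ coordinates and over all rounds $t\ge1$. Summing the per-event failure probabilities and invoking $\sum_{t\ge1}t^{-2}=\pi^2/6$ keeps the total failure probability at level $\delta$, the normalizing constant $c'=6/\pi^2$ being chosen (as in Proposition~1) precisely as the reciprocal of this sum so that the union bound closes. On the complementary event, $|\thetahat(j)-\theta(j)|\le C_t\sqrt{\Atin(j,j)}$ holds simultaneously for every $j$ and every $t$, and the claim for any $x$ (a fortiori for $x\in\{-1,0,1\}^n$, the set of differences of indicator vectors that is actually needed) follows from the triangle inequality,
\[
|x^{\top}\theta-x^{\top}\thetahat|\le\sum_{j=1}^n|x_j|\,|\thetahat(j)-\theta(j)|\le C_t\sum_{j=1}^n|x_j|\sqrt{\Atin(j,j)} .
\]
There is no deep obstacle here: the lemma is essentially the diagonal analogue of Proposition~1, and the two substantive steps are the variance identity $\sum_i c_{ij}^2=\Atin(j,j)$ and the independence of the $\epsilon_i$ from the deterministic weights, with the $\sum t^{-2}$ trick handling the union over infinitely many rounds.
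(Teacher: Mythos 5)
Your overall route is sound and, in fact, more informative than the paper's own treatment. The paper does not give a detailed proof of this lemma at all: it simply remarks that the lemma can be derived from Proposition~1 and the triangle inequality, i.e., apply the ellipsoid bound to the coordinate vectors $e_j$ (so that $\|e_j\|_{\Atin}=\sqrt{\Atin(j,j)}$) and sum over coordinates. Doing that literally, however, inherits Proposition~1's constants, giving $C_t = 2\sqrt{2}\sigma\sqrt{\log(c't^2K/\delta)}$ with the union bound taken over the $K$ super-arms, not the stated $C_t=\sigma\sqrt{2\log(c't^2n/\delta)}$ with only $n$ inside the logarithm. Your direct argument --- the decomposition $\thetahat-\theta=\Atin\sum_i\chii\epsilon_i$, the variance identity $\sum_i(e_j^{\top}\Atin\chii)^2=\Atin(j,j)$, Hoeffding's inequality for a fixed (static) design, and a union bound over just the $n$ coordinates and all rounds --- is exactly what is needed to obtain the sharper constant and the $n$ rather than $K$ in the log, and the two substantive steps you isolate are correct.

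The one genuine flaw is the final accounting. With your threshold the per-event failure probability is $2\delta/(c't^2n)$, and summing over $j\in[n]$ and $t\ge 1$ gives
\begin{align*}
\sum_{t\ge1}\sum_{j=1}^n \frac{2\delta}{c't^2n}
=\frac{2\delta}{c'}\cdot\frac{\pi^2}{6}
=2\left(\frac{\pi^2}{6}\right)^2\delta=\frac{\pi^4}{18}\,\delta\approx 5.4\,\delta ,
\end{align*}
not $\delta$: because $c'=6/\pi^2<1$ sits in the \emph{denominator} of each term, it inflates rather than cancels the $\sum_t t^{-2}$ factor, and the two-sided factor $2$ is also left unabsorbed. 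For the union bound to close at exactly $\delta$, the per-event probability must be at most $c'\delta/(t^2n)$, which requires $C_t=\sigma\sqrt{2\log\left(2t^2n/(c'\delta)\right)}$. This slack is precisely what the larger constant $c=2\sqrt{2}\sigma$ buys in Proposition~1 (there the exponent becomes $4\log(\cdot)$, so per-event probabilities are fourth powers and sum comfortably below $\delta$); with the bare Hoeffding constant $\sqrt{2}\sigma$ in this lemma there is no slack left. To be fair, the discrepancy originates in the paper's own statement --- the stated $C_t$ cannot be reached at confidence exactly $1-\delta$ by this (or by the paper's sketched) argument --- and it is immaterial downstream, since the sample-complexity bound of Lemma~3 is asymptotic in any case; but your assertion that $c'=6/\pi^2$ is chosen ``precisely so that the union bound closes'' is incorrect as written and should be replaced by the corrected constant inside the logarithm, or by rescaling $\delta$ by the constant factor $18/\pi^4$.
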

This lemma can be derived from Proposition~1 and
the triangle inequality.
The RHS of (\ref{ineq:independent})
only has linear terms of $\{x_i\}_{i \in [n]}$,
whereas
that of (3) in Proposition~1
has the matrix norm $\|x\|_{\Atin}$, which results in
a difficult instance. 
As long as we assume that linear maximization oracle is available,
maximization of this value can be also done in polynomial time.
For example,
maximization of the RHS of (\ref{ineq:independent})
under matroid constraints
can be solved by using the simple greedy procedure~\cite{Karger1998} described in Appendix~B.
Based on the independent confidence bounds,
we propose {\tt ICB}, 
which is detailed in Algorithm~\ref{alg:ICB}.
At each round $t$,
{\tt  ICB} computes the empirical best super-arm $\widehat{M}^*_t$, 
and then solves the following maximization problem:
\begin{alignat*}{3}
&\mathrm{P}_1: &\ \ &\text{max.}&\ &\widehat{\theta}_t( M)+C_t \sum_{i=1}^n |\chiM(i)-\chiMhatstar(i)| \sqrt{\Atin(i,i)},\\
&             &    &\text{s.t.}&  &M\in \mathcal{M}\setminus \{\widehat{M}^*_t\}.
\end{alignat*}
The second term in the objective of $\mathrm{P}_1$
can be regarded
as the confidence interval
of the estimated
gap $\widehat{\theta}_t(M)-
\widehat{\theta}_t(\widehat{M}^*_t)$.
 {\tt  ICB} continues
 sampling a super-arm until the following stopping condition is satisfied:
\begin{align}\label{stop_independet}
Z_t^* -\widehat{\theta}_t(\widehat{M}_t^*)< \varepsilon,
\end{align}
where $Z_t^*$ represents the optimal value of $\mathrm{P}_1$. 
Note that
$\mathrm{P}_1$ is solvable in polynomial time
because $\mathrm{P}_1$ is an instance of linear maximization problems.
As the following lemma states, {\tt ICB} is an efficient algorithm in terms of the computation time.
\begin{lemma}\label{thm:matroid_time}
Given any instance of combinatorial pure exploration with full-bandit feedback with decision class $\cM$,
{\tt ICB} (Algorithm~\ref{alg:ICB})
at each round $t  \in \{1,2,\ldots\}$
runs in polynomial time.
\end{lemma}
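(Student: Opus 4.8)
The plan is to show that every operation {\tt ICB} performs in a single round reduces either to a polynomial-time matrix computation or to a linear maximization over the decision class $\cM$, the latter being polynomial-time solvable by the assumed linear maximization oracle. Concretely, within round $t$ the algorithm (i) updates $A_{\mathbf{x}_t}$ and $b_t$ and forms $\thetahat = \Atin b_t$, (ii) computes the empirical best super-arm $\widehat{M}^*_t = \argmax_{M \in \cM} \thetahat(M)$, and (iii) solves the maximization problem $\mathrm{P}_1$ to obtain $Z_t^*$ and test the stopping condition~\eqref{stop_independet}. I would bound the cost of each of these three steps in turn.

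Step (i) is straightforward: updating the rank-one contribution to $A_{\mathbf{x}_t}$ and forming the inverse (equivalently, the diagonal entries $\Atin(i,i)$ used in the confidence term) costs $O(n^\omega)$ by standard matrix inversion, hence is polynomial in $n$. Step (ii) is a linear maximization, since $\thetahat(M) = \sum_{e \in M}\thetahat(e) = \chiM^\top \thetahat$ is linear in the indicator vector; thus $\widehat{M}^*_t$ can be found by a single call to the linear maximization oracle with weights $\thetahat(e)$, which is polynomial by assumption.

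The crux is step (iii): the objective of $\mathrm{P}_1$ contains the absolute value $|\chiM(i) - \chiMhatstar(i)|$, which is nonlinear in general. The key observation is that $\chiMhatstar$ is already fixed by step (ii) and every $\chiM(i) \in \{0,1\}$, so the absolute value collapses coordinatewise to an affine expression: if $i \in \widehat{M}^*_t$ then $|\chiM(i) - 1| = 1 - \chiM(i)$, and if $i \notin \widehat{M}^*_t$ then $|\chiM(i) - 0| = \chiM(i)$. Substituting these identities, the objective of $\mathrm{P}_1$ becomes $\chiM^\top u + b$ for an explicit weight vector $u$ (with $u_i = \thetahat(i) - C_t\sqrt{\Atin(i,i)}$ for $i \in \widehat{M}^*_t$ and $u_i = \thetahat(i) + C_t\sqrt{\Atin(i,i)}$ otherwise) and an additive constant $b = C_t \sum_{i \in \widehat{M}^*_t}\sqrt{\Atin(i,i)}$. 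Thus $\mathrm{P}_1$ is again a linear maximization over $\cM$, solvable in polynomial time by the oracle.

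The remaining technical point is the exclusion of $\widehat{M}^*_t$ from the feasible region $\cM \setminus \{\widehat{M}^*_t\}$. I would handle this by computing the top-two solutions of the linear maximization with weights $u$: solve over all of $\cM$, and if the maximizer coincides with $\widehat{M}^*_t$, return the second-best solution, which remains polynomial-time computable for the combinatorial structures considered (paths, matchings, matroids). Since each of the three steps is polynomial, the per-round running time of {\tt ICB} is polynomial in $n$, which is the claim. The main obstacle is the linearization in step (iii); once the affine reduction of the absolute-value term is established, everything else follows from the polynomial-time solvability of linear maximization over $\cM$.
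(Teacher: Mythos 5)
Your proposal is correct and follows essentially the same route as the paper's proof: decompose each round into the matrix update, the computation of $\widehat{M}^*_t$, and the solution of $\mathrm{P}_1$, with the latter two reduced to linear maximization over $\cM$ handled by the assumed polynomial-time oracle (the greedy algorithm in the matroid case). In fact your write-up is more careful than the paper's, which merely asserts that $\mathrm{P}_1$ ``is an instance of linear maximization problems'' without spelling out the coordinatewise linearization $|\chiM(i)-\chiMhatstar(i)| = 1-\chiM(i)$ or $\chiM(i)$, and which silently ignores the exclusion of $\widehat{M}^*_t$ from the feasible set that you address via a second-best solution.
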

The proof is given in Appendix~F.
For example,
{\tt ICB} runs in $O(\max\{n^2, ng(n) \})$ time for matroid constraints,
where $g(n)$ is the computation time to check whether given super-arm
is contained in the decision class.
Note that $g(n)$ is polynomial in $n$ for any matroid constraints. 
For example, $g(n) = O(n)$ if we consider the case where 
each super-arm corresponds to
a spanning tree of a graph $G=(V,E)$, 
and a decision class corresponds to a set of spanning trees in a given graph $G$.

\begin{algorithm}[t]
	\caption{Static allocation with independent confidence bound (\texttt{ICB})}\label{alg:ICB}
	\SetKwInOut{Input}{Input}
	\SetKwInOut{Output}{Output}
	\Input{Accuracy $\epsilon>0$,
	confidence level $\delta \in (0,1)$,
	allocation strategy $p$}
    \For{$t =1, \ldots, n$}{
    
    $t \leftarrow t+1$;
    
    Pull $M_t \in {\rm supp}(p)$;
    
    Observe $r_t$;
    
    Update $A_t$ and $b_t$;
    }
	\While{stopping condition \eqref{stop_independet} is not true}{ 
	
    $t \leftarrow t+1$;
    
    Pull $M_t \leftarrow \argmin_{M \in {\rm supp}(p)} \frac{T_M(t)}{p_M}$;

    Observe $r_t$;
    
    Update $A_t$ and $b_t$;
     
    $\widehat{\theta}_t \leftarrow A_{\bf{x}_t}^{-1} b_t$;  
    
    $\widehat{M}^*_t   \leftarrow  \argmax_{M \in {\cal M}} \widehat{\theta}_t(M)$;
    
    $Z_t^* \leftarrow \maxbelow_{M \in \mathcal{M}\setminus \{\widehat{M}^*_t\}} \left( \widehat{\theta}_t(M)+
    C_t\sum_{i=1}^n |(\chiM(i)-\chiMhatstar(i))| 
    \sqrt{\Atin(i,i)} \right)$;

    }
    
    \Return{$\widehat{M}^*\leftarrow \widehat{M}^*_t$}
\end{algorithm}

From the definition,
we have $A_t=\sum_{M \in {\cal M}}T_M(t)\chiM \chiM^{\top}$,
where $T_M(t)$ denotes the number of times that $M$ is pulled before
the round $t+1$.
Let  $\Lambda'_{p}=
\sum_{M \in \cM} p_M \chiM \chiM^{\top}$.
We define $\rho'(p)$ as
\begin{align}\label{rhop}
 \rho'(p)= \left(\max_{M, M' \in \cM} \sum_{i=1}^n
 |\chiM(i)-\bm{\chi}_{ \scalebox{0.5} {$M'$}} (i)| \sqrt{{\Lambda^{-1}_{p}}(i,i)}\right)^2.
\end{align}
Now, we give a problem-dependent sample complexity bound of {\tt ICB}
with allocation strategy $p$
as follows.
\begin{lemma}\label{thm:matroid}
Given any instance of combinatorial pure exploration with decision class $\cM$ in full-bandit setting,
with probability at least $1- \delta$, 
{\tt ICB} (Algorithm~\ref{alg:ICB}) returns an $\varepsilon$-optimal super-arm $\widehat{M}^*$ and the total  number of samples $T$ is bounded as follows:
\begin{align*}
T =
O\left(
k^2\He' \log \left( 
\frac{n}{\delta} 
\left(
k\He'
\left(k^2\He' +\log\left( \frac{n}{\delta}  \right)\right)
\right)
\right)
\right), \text{where } H'_\epsilon= 
\frac{\rho'(p)}{(\Delta_{\min}+\varepsilon)^2}.
\end{align*}
\end{lemma}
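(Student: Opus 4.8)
The plan is to fix a single high-probability event and then establish correctness and the sample-complexity bound separately as deterministic consequences. First I would set $\mathcal{E}$ to be the event on which the bound \eqref{ineq:independent} of Lemma~\ref{thm:independent} holds simultaneously for every round $t\in\{1,2,\dots\}$ and every vector of the form $\chiM-\bm{\chi}_{M'}$ with $M,M'\in\cM$. Each such vector lies in $\{-1,0,1\}^n$, so Lemma~\ref{thm:independent} applies directly and gives $\Prob[\mathcal{E}]\ge 1-\delta$; everything below is argued deterministically on $\mathcal{E}$.

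For correctness, suppose {\tt ICB} stops at round $t$ and outputs $\Mhatstar=\Mhatstar_t$. If $M^*=\Mhatstar_t$ the output is trivially optimal, so assume $M^*\neq\Mhatstar_t$, in which case $M^*$ is feasible for $\mathrm{P}_1$. Applying the confidence bound to $x=\chiMstar-\chiMhatstar$ and using that $Z_t^*$ is the optimum of $\mathrm{P}_1$,
\begin{align*}
\theta(M^*)-\theta(\Mhatstar_t)
&\le \thetahat(M^*)-\thetahat(\Mhatstar_t)+C_t\sum_{i=1}^n|\chiMstar(i)-\chiMhatstar(i)|\sqrt{\Atin(i,i)}\\
&\le Z_t^*-\thetahat(\Mhatstar_t)<\varepsilon,
\end{align*}
where the last step is the stopping condition \eqref{stop_independet}. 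Hence the output is $\varepsilon$-optimal on $\mathcal{E}$.

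The core of the argument is the sample-complexity bound. I would first convert the sampling rule into a spectral comparison: the rule $M_t=\argmin_{M\in{\rm supp}(p)}T_M(t)/p_M$ guarantees $T_M(t)\ge tp_M-1$, so writing $\Lambda'_p=\sum_M p_M\chiM\chiM^\top$ we get $\At\succeq (t-1/p_{\min})\Lambda'_p$ with $p_{\min}=\min_{M\in{\rm supp}(p)}p_M$, and therefore $\Atin\preceq \frac{c_1}{t}(\Lambda'_p)^{-1}$ once $t$ exceeds a constant (e.g. $c_1=2$ for $t\ge 2/p_{\min}$). Since a positive semidefinite ordering passes to diagonal entries, $\sqrt{\Atin(i,i)}\le\sqrt{c_1/t}\,\sqrt{(\Lambda'_p)^{-1}(i,i)}$, and summing against $|\chiM(i)-\chiMhatstar(i)|$ bounds every confidence width appearing in $\mathrm{P}_1$ by $\beta_t:=C_t\sqrt{c_1\rho'(p)/t}$ through the definition \eqref{rhop} of $\rho'(p)$. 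Next I would show the stopping condition is guaranteed as soon as $2\beta_t<\Delta_{\min}+\varepsilon$. For any $M\neq M^*$ the confidence bound and the empirical optimality of $\Mhatstar_t$ give $\thetahat(\Mhatstar_t)-\thetahat(M)\ge\Delta_{\min}-\beta_t$; combined with $C_t\sum_i|\chiM(i)-\chiMhatstar(i)|\sqrt{\Atin(i,i)}\le\beta_t$, this bounds the $\mathrm{P}_1$-objective of each competitor $M\neq M^*$ by $\thetahat(\Mhatstar_t)+2\beta_t-\Delta_{\min}$, which is below $\thetahat(\Mhatstar_t)+\varepsilon$. The remaining competitor $M=M^*$ is handled by noting that when $\Delta_{\min}>\varepsilon$ the condition $2\beta_t<\Delta_{\min}+\varepsilon$ forces $\beta_t<\Delta_{\min}$, which in turn forces $\Mhatstar_t=M^*$ (identification comes for free), while when $\Delta_{\min}\le\varepsilon$ the same condition gives $\beta_t<\varepsilon$ directly; either way that term is below $\varepsilon$.

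Finally I would turn $2\beta_t<\Delta_{\min}+\varepsilon$ into a bound on the stopping time. Substituting $\sigma=kR$, so that $C_t^2=2k^2R^2\log(c't^2n/\delta)$, the condition becomes the transcendental inequality $t>4c_1C_t^2H'_\varepsilon=\Theta\!\big(k^2H'_\varepsilon\log(t^2n/\delta)\big)$ with $H'_\varepsilon=\rho'(p)/(\Delta_{\min}+\varepsilon)^2$. Solving this self-referential inequality — by substituting the leading estimate $t\approx k^2H'_\varepsilon\log(n/\delta)$ back into the logarithm once — produces the nested-logarithm form claimed in the statement; the additive offsets from the initial $n$ rounds and from the constant $2/p_{\min}$ are lower order and absorbed into $O(\cdot)$. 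The main obstacle is the refined case analysis that yields the denominator $(\Delta_{\min}+\varepsilon)^2$ rather than the weaker $\varepsilon^2$: one must avoid treating exact identification of $M^*$ as a separate, more expensive event, and instead show that identification is automatic precisely in the regime $\Delta_{\min}>\varepsilon$ where it is needed, with the transcendental solve being the only other delicate step.
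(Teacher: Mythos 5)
Your proof is correct and follows essentially the same route as the paper's: the same high-probability event obtained from Lemma~1 applied to difference vectors in $\{-1,0,1\}^n$, $\varepsilon$-optimality of the output on that event, reduction of the stopping rule to the sufficient condition $2C_t\sqrt{\rho'(p)/t}<\Delta_{\min}+\varepsilon$ (equivalently $t\geq 4C_t^2H'_\varepsilon$ up to constants), and the same one-step substitution trick to solve the resulting transcendental inequality in $t$. If anything, you are more careful than the paper at two points it glosses over --- the spectral comparison $\At\succeq c\,t\,\Lambda'_p$ justifying the $\sqrt{\rho'(p)/t}$ width under the tracking rule, and the case analysis showing that the competitor $M=M^*$ in $\mathrm{P}_1$ causes no trouble because identification is automatic when $\Delta_{\min}>\varepsilon$ --- so there are no gaps to report.
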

The proof is given in Appendix~F.
Notice that in the MAB, this diagonal approximation is tight since  $\At$ becomes a diagonal matrix.
However, for combinatorial settings where the size of super-arms is $k \geq 2$,
there is no guarantee that this approximation is tight;
the approximation may degrate the sample complexity.
Although the proposed algorithm here empirically perform well when the number of single-arms is not large,
it is still unclear that using the simplified confidence bound should be desired instead of ellipsoids confidence bounds since $\rho'(p)$ is $\Omega(n)$.
This is the reason why we focus on the approach with confidence ellipsoids.

\section{Definition of Matroids}\label{appendix:matroid}
A {\em matroid} is a combinatorial structure that abstracts many notions of independence
such as linearly independent vectors in a set of vectors (called the {\em linear matroid}) 
and spanning trees in a graph (called the {\em graphical matroid}) \cite{whitney1935}.
Formally, a matroid is a pair $J=(E, {\cal I})$,
where $E=\{1,2,\ldots,n\}$ is a finite set called a {\em ground set} and ${\cal I}\subseteq 2^{E}$ is a family of subsets of $E$ called {\em independent sets}, that satisfies the following axioms: 
\begin{enumerate}
\item $\emptyset \in {\cal I}$;
\item $X\subseteq Y\in \mathcal{I}\Longrightarrow X\in \mathcal{I}$; 
\item $\forall X, Y \in {\cal I}$ such that $|X|<|Y|$, $\exists e\in Y\setminus X$ such that $X\cup \{e\}\in \mathcal{I}$. 
\end{enumerate}
A \emph{weighted matroid} is a matroid that has a weight function $w\colon E\rightarrow \mathbb{R}$. 
For $F\subseteq E$, we define the weight of $F$ as $w(F)=\sum_{e\in F}w(e)$. 

Let us consider the following problem: given a weighted matroid $J=(E,\mathcal{I})$ with $w\colon E\rightarrow \mathbb{R}$, we are asked to find an independent set with the maximum weight, i.e., $\argmax_{F\in \mathcal{I}}w(F)$. 
This problem can be solved exactly by the following simple greedy algorithm~\cite{Karger1998}. 
The algorithm initially sets $F$ to the empty set. 
Then, the algorithm sorts the elements in $E$ with the decreasing order by weight, 
and for each element $e$ in this order, the algorithm adds $e$ to $F$ if $F\cup \{e\}\in \mathcal{I}$. 
Letting $g(n)$ be the computation time for checking whether $F$ is independent, 
we see that the running time of the above algorithm is $O(n\log n+ng(n))$.

\section{Uniform Quadratic Knapsack Problem}
\setcounter{equation}{0}
Assume that we have $n$ items,
each of which has  weight $1$.
In addition,
we are given an $n \times n$ non-negative integer matrix $W = (w_{ij})$,
where $w_{ii}$ is the profit achieved
if item $i$ is selected
and $w_{ij} + w_{ji}$ is
a profit achieved
if both items $ i$ and $j$ are selected for $i<j$.
The uniform quadratic knapsack problem (UQKP) calls for selecting a subset of items whose overall weight does not exceed a given knapsack capacity $k$,
 so as to maximize the overall profit.
The UQKP can be formulated as the following $0$-$1$ integer quadratic programming:
\begin{alignat*}{4}
&\text{max.} &\ \   &\sum_{i=1}^n\sum_{j=1}^n w_{ij}x_i x_j \\\notag
&\text{s.t.} &      &\sum_{i=1}^n x_i \leq k.
\end{alignat*}
The UQKP is an NP-hard problem.
Indeed,
the maximum clique problem, which is also NP-hard,
can be reduced to it;
Given a graph $G=(V,E)$,
we set $w_{ii}=0$ for all $i$ and $w_{ij}=1$ for all $\{i,j\} \in E$.
Solving this problem,
it allows us to find a clique of size $k$
if and only if the optimal solution of the problem has value $k(k-1)$~\cite{Taylor2016}.

\section{Allocation Strategies}
\setcounter{equation}{0}
In this section,
we briefly introduce the possible allocation strategies and
describe how to implement a continuous allocation $p$ into a discrete allocation ${\bf x}_t$ for any sample size $t$.
We report the efficient rounding procedure introduced
in~\cite{pukelsheim2006}.
In the {\em G-allocation} strategy,
we make the sequence of selection $\bf{x}_t$ to be
${\bf x}^G_{t}= \argmin_{{\bf x}_t \in \mathbb{R}^{n \times t}}
 \max_{x \in {\cal X}} \|x \|_{\Atin}$ for ${\cal X} \subseteq \mathbb{R}^n$, which is NP-hard optimization problem.
 There are massive studies that proposed approximate solutions
 to solve it in the experimental design literature ~\cite{mustapha2010, SAGNOL2013}.
 We can optimize the continuous relaxation of the problem
 by the projected gradient algorithm,
 multiplicative algorithm,
 or interior point algorithm.
 From the obtained the optimal allocation $p$,
 we wish to design a discrete allocation for fixed sample size $t$.

 Given an allocation $p \in {\cal P}$,
 recall that ${\rm supp}(p)= \{j \in [K] \colon p_j >0  \}$.
 Let $t_i$ be the number of pulls for arm $i \in {\rm supp}(p)$
 and $s$ be the size of $ {\rm supp}(p)$.
Then, letting the {\em frequency } $t_i= \lceil \left(t- \frac{1}{2}s\right)p_i \rceil$
results in $\sum_{i \in {\rm supp}(p)}t_i$ samples.
If $\sum_{i \in {\rm supp}(p)}t_i=t$, this allocation is a desired solution.
Otherwise, we conduct the following procedure 
until the $\sum_{i \in {\rm supp}(p)}t_i -n$ is $0$;
increase a frequency $t_j$ which attains
$t_j/p_j =\min_{i \in {\rm supp}(p)}t_i/p_i$ to $t_j+1$,
or decreasing some $t_j$ with $(t_j-1)/p_j= \max_{i \in {\rm supp}(p)}(t_i-1)/p_i$ to $t_j-1$.
Then $(t_i, \ldots, t_s)$ lies in the efficient design apportionment
(see~\cite{pukelsheim2006}.)
Note that since the relaxation problem has exponential number of variables in our setting, we are restricted to the number of 
${\rm supp}(p)$
instead of dealing with all super-arms.


\begin{algorithm}[t]
	\caption{Exhaustive Search ({\tt Exhaustive})} \label{alg:fullsearch}
	\SetKwInOut{Input}{Input}
	\SetKwInOut{Output}{Output}
	\Input{Accuracy $\varepsilon>0$, confidence level $\delta \in (0,1)$,
	allocation strategy $p$}
    \For{$t =1, \ldots, n$}{
    Pull $M_t \in {\rm supp}(p)$;

    Observe $r_t$;
    
    Update $A_t$ and $b_t$;
    }
	\While{$Z_t^* \geq \varepsilon$}{
    $t \leftarrow t+1$;
    
    Pull $M_t \leftarrow \argmin_{M \in {\rm supp}(p)} \frac{T_M(t)}{p_M}$;

    Observe $r_t$;
    
    Update $A_t$ and $b_t$;
    
    $\widehat{M}^*_t   \leftarrow  {\rm argmax}_{M \in {\cal M}} \widehat{\theta}_t(M)$;
    
    $Z_t^* \leftarrow \max_{M\in \mathcal{M}\setminus \{\widehat{M}^*\}} \left(\widehat{\theta}_t(M)+C_t \normAxy \right)-\widehat{\theta}_t(\widehat{M}^*)$;
}
    \Return $\widehat{M}^* \leftarrow \widehat{M}^*_t  $\;
\end{algorithm}

\section{Details of Experiments}
\setcounter{equation}{0}
All experiments were conducted on a Macbook with a 1.3 GHz Intel Core i5 and 8GB memory. All codes were implemented by using Python.
The entire procedure of {\tt Exhaustive} is detailed in Algorithm~5.
This algorithm reduces our problem
to the pure exploration problem in the linear bandit,
and thus runs in exponential time,
i.e, $O(n^k)$.
In all experiments,
we employed the approximation algorithm called the \emph{greedy peeling}~\cite{Asahiro2000} as the D$k$S-Oracle.
Specifically, the greedy peeling algorithm iteratively removes a vertex with the minimum weighted degree in the currently remaining graph until we are left with the subset of vertices with size $k$.
The algorithm runs in $O(n^2)$.


\section{Proofs}
\setcounter{equation}{0}
First, we introduce the notation.
For $M, M' \in \cM$,
let $\Delta(M, M')$ be the \emph{value gap} between two super-arms, 
i.e., $\Delta(M, M')=|\theta(M)-\theta(M')|$. 
Also, let $\widehat{\Delta}(M, M')$ be the {\em empirical gap} between two super-arms, i.e., 
$\widehat{\Delta}(M, M')=|\thetahat(M)-\thetahat(M')|$. 

\subsection{Proof of Lemma~2}

\begin{proof}
The empirical best super-arm $\widehat{M}^*_t$ can be computed by the greedy algorithm under matroid constraint~\cite{Karger1998} (the greedy algorithm is described in Appendix~\ref{appendix:matroid}).
The maximization of $\mathrm{P}_1$ 
is linear maximization under matroid constraint,
and thus,
this is also solvable by the greedy algorithm. 
Letting $g(n)$ be the computation time for
checking whether a super-arm satisfies the matroid constraint or not, 
we see that the running time of the greedy procedure is $O(n\log n+ng(n))$.
Moreover, updating $\Atin$ needs $O(n^2)$ time.
Therefore, we have the lemma.
\end{proof}

\subsection{Proof of Lemma~3}

\begin{proof}

First we define random event ${\cal E}$ as follows: 
\begin{align}
{\cal E}=\left\{\forall t \in \{1,2,\ldots,\},\, \forall M, M' \in {\cal M},\, |\thetahat(M)-\thetahat(M') | \right. 
\left. \leq C_t \sum_{i=1}^n |\chiM(i)-\chiMprime(i)| \sqrt{\Atin(i,i)} \right\}. \notag
\end{align}
We notice that
random event $\cE$ implies that
the event that the confidence intervals of
all super-arm $M \in {\cal M}$
are valid at round $t$.
From Lemma~1,
we see that the probability that event ${\cal E} $ occurs is at least $1-\delta$.
Under the event $\cE$,
we see that the output $\widehat{M}^*$ is an $\varepsilon$-optimal super-arm.
In the rest of the proof,
we shall assume that event $\cE$ holds.
Next, we focus on bounding the sample complexity $T$.
By recalling the stopping 
condition~\eqref{stop_independet},
a sufficient condition for stopping is that for $M^*$ and for $t>n$,
\begin{align}\label{stop_suf_matroid}
 \varepsilon
> \max_{M \in {\cM \setminus \{M^*\}}} \left( \widehat{\theta}_t(M)+C_t \sum_{i=1}^n |\chiM(i)-\chiMstar(i)| \sqrt{\Atin(i,i)}  \right)-\widehat{\theta}_t(M^*).
\end{align}
Let $\overline{M} =\argmax_{M \in {\cM \setminus \{M^*\}}} \left( \widehat{\theta}_t(M)+C_t \sum_{i=1}^n |\chiM(i)-\chiMstar(i)| \sqrt{\Atin(i,i)}  \right)$.
Eq.~(\ref{stop_suf_matroid})
is satisfied if
\begin{align}\label{stopForOpt2}
\widehat{\Delta}(M^*, \overline{M}) 
> C_t \sqrt{\frac{\rho(p)}{t}} -\varepsilon.
\end{align}
From Lemma~1 with $x=\chiMstar-\bm{\chi}_{ \scalebox{0.5} {$\overline{M}$}}$, with probability at least  $1-\delta$,
we have
\begin{align}\label{eq:proposi2}
\widehat{\Delta}(M^*, \overline{M}) \geq \Delta(M^*,\overline{M})
-C_t \sum_{i=1}^n |\chiMstar(i)-\bm{\chi}_{ \scalebox{0.5} {$\overline{M}$}}(i)| \sqrt{\Atin(i,i)} 
\geq \Delta(M^*,\overline{M})
-C_t \sqrt{\frac{\rho(p)}{t}}.
\end{align}
Combining \eqref{stopForOpt2} and \eqref{eq:proposi2},
we see that
a sufficient condition for stopping is given by 
$\Delta(M^*, \overline{M}) \geq \Delta_{\min}\geq 2 C_t \sqrt{\frac{\rho(p)}{t}} -\varepsilon$.
Therefore, we have 
$
t \geq 4C_t^2H_{\varepsilon}
$
as a sufficient condition to stop.
Let $\tau >n$ be the stopping time of the algorithm.
From the above discussion,
we see that
$\tau \leq 4C_\tau^2 H_{\varepsilon}$.
Recalling that $C_t =\sigma\sqrt{2\log(c't
^2 n/\delta) }$,
we have
$\tau \leq 8\sigma^2  \log (c'\tau^2n/\delta)H_{\varepsilon}$.
Let $\tau'$ be a parameter that satisfies
\begin{align}\label{tauprime}
\tau= 8\sigma^2  \log (c'\tau'^2n/\delta)H_{\varepsilon}.
\end{align}
Then, it is obvious that $\tau' \leq \tau$ holds.
For $N$ defined as $N= 8\sigma^2  \log (c'n/\delta)H_{\varepsilon}$,
we have
\begin{align*}
\tau'  \leq \tau 
 = 16\sigma^2  \log (\tau')H_{\varepsilon}+N 
\leq 16 \sigma^2  \sqrt{\tau'} H_{\varepsilon}+N 
\end{align*}
Transforming this inequality,
we obtain
\begin{align}\label{ineq:l}
\sqrt{\tau'} 
\leq  8 \sigma^2 H_{\varepsilon}+\sqrt{64\sigma^4\He^2+N} 
\leq 2\sqrt{64\sigma^4\He^2+N}.
\end{align}
Let $L=2\sqrt{64\sigma^4\He^2+N}$,
which equals the RHS of \eqref{ineq:l}.
  We see that $\log L= O \left(
  \log \left( \sigma \He \left( \sigma^2 \He+\log\left( \frac{n}{\delta}\right) 
  \right)  \right) \right)$.
Then, using this upper bound of $\tau'$ in \eqref{tauprime}, 
we have
\[
\tau \leq  
16\sigma^2 \He \log\left (   \frac{c'n}{\delta} \right)+C(H_\varepsilon, \delta),
\]
where
\begin{align*}
C(\He, \delta)&= O\left( \sigma^2 \He  \log \left( \sigma \He \left( \sigma^2\He +\log\left( \frac{n}{\delta}\right)   \right) \right) \right).
\end{align*}
Recalling that $\sigma=kR$, we obtain
\begin{align*}
\tau =
O\left(
k^2R^2\He \log \left( 
\frac{n}{\delta} 
\left(
kR\He
\left(k^2R^2\He +\log\left( \frac{n}{\delta}  \right)\right)
\right)
\right)
\right).
\end{align*}

\end{proof}


\subsection{Proof of Theorem~1}

We begin by showing the following three lemmas.

\begin{lemma}\label{lemma:positive}
Let $W \in \mathbb{R}^{n \times n}$ be any positive definite matrix.
Then $\widetilde{G}=(V,E, \widetilde{w})$ constructed by Algorithm~1 is an non-negative weighted graph.
\end{lemma}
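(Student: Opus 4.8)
The plan is to prove the statement by verifying nonnegativity edge-by-edge. By the construction in Algorithm~1, each edge weight is $\widetilde{w}_{ij}=w_{ij}+w_{ii}+w_{jj}$, so it suffices to show $w_{ij}+w_{ii}+w_{jj}\ge 0$ for every pair $i\neq j$. I would extract exactly two consequences of the positive definiteness of $W$ and combine them.

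First I would record that the diagonal entries of $W$ are strictly positive: testing the quadratic form at the $i$-th standard basis vector $\bm{e}_i$ gives $w_{ii}=\bm{e}_i^\top W \bm{e}_i>0$. Next, I would test the quadratic form at $\bm{e}_i+\bm{e}_j$ and use the symmetry of $W$ (which is part of the input to Algorithm~1, so that $w_{ij}=w_{ji}$) to obtain
\[
0< (\bm{e}_i+\bm{e}_j)^\top W (\bm{e}_i+\bm{e}_j)=w_{ii}+2w_{ij}+w_{jj},
\]
which rearranges to $w_{ij}>-\tfrac{1}{2}(w_{ii}+w_{jj})$.

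Then I would combine the two facts. Adding $w_{ii}+w_{jj}$ to both sides of the last inequality yields
\[
\widetilde{w}_{ij}=w_{ij}+w_{ii}+w_{jj}>\tfrac{1}{2}(w_{ii}+w_{jj})>0,
\]
where the final strict inequality is precisely the positivity of the diagonal entries established above. Since this holds for every $\{i,j\}\in E$, the graph $\widetilde{G}$ has nonnegative (indeed strictly positive) edge weights, which is exactly what certifies that $\widetilde{G}$ is a legitimate D$k$S instance for the D$k$S-Oracle.

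There is no substantive obstacle here; the argument is a two-line application of the definition of positive definiteness. The only points requiring care are choosing the correct test vector $\bm{e}_i+\bm{e}_j$ (rather than $\bm{e}_i-\bm{e}_j$) so that the cross term $w_{ij}$ enters with the right sign, and invoking positive definiteness a second time to bound the diagonal contribution, rather than stopping at the single inequality from the first test vector.
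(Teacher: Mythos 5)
Your proof is correct and takes essentially the same approach as the paper's: both arguments rest on the two consequences of positive definiteness that you identify, namely $w_{ii}>0$ (from the test vector $\bm{e}_i$) and $w_{ii}+2w_{ij}+w_{jj}>0$ (from the test vector $\bm{e}_i+\bm{e}_j$). The only difference is cosmetic -- the paper splits into cases $w_{ij}\geq 0$ and $w_{ij}<0$, whereas you combine the two inequalities directly to get the uniform bound $\widetilde{w}_{ij}>\tfrac{1}{2}(w_{ii}+w_{jj})>0$, which is a slight streamlining of the same idea.
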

\begin{proof}
For any $(i,j) \in V^2$,
we have $w_{ii} \geq 0$  and $w_{jj} \geq 0$ since $W$ is a positive definite matrix.
If $w_{ij} \geq 0$, it is obvious that $\tilde{w}_{ij}=w_{ij}+w_{ii}+w_{jj}\geq 0$.
We consider the case $w_{ij} <0$.
In the case, we have $w_{ij}+w_{ii}+w_{jj} > 2w_{ij}+w_{ii}+w_{jj} \geq  0$, where the last inequality holds from the definition of positive definite matrix $W$.
Thus, we obtain the desired result.

\end{proof}

\begin{lemma}\label{lemma:wtilde}
Let $W \in \mathbb{R}^{n \times n}$ be any positive definite matrix
and $\widetilde{W}=(\widetilde{w}_{ij})$ be the adjacency matrix of
the complete graph constructed
by Algorithm~1. 
Then, for any $S \subseteq V$ such that $|S| \geq 2$, we have $w(S) \leq \widetilde{w}(S)$.
\end{lemma}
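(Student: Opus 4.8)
The plan is to prove the inequality by a direct, term-by-term comparison of the two edge-weight sums over the common index set $E(S)$, invoking the positive definiteness of $W$ only to secure nonnegativity of its diagonal. First I would unwind the definitions: by the construction in Algorithm~1 every edge $\{i,j\}$ of $\widetilde{G}$ carries weight $\widetilde{w}_{ij}=w_{ij}+w_{ii}+w_{jj}$, while $w(S)=\sum_{\{i,j\}\in E(S)}w_{ij}$ and $\widetilde{w}(S)=\sum_{\{i,j\}\in E(S)}\widetilde{w}_{ij}$. Crucially, both quantities are sums over the \emph{same} edge set $E(S)$, so I can subtract them summand by summand, obtaining
\begin{equation*}
\widetilde{w}(S)-w(S)=\sum_{\{i,j\}\in E(S)}\bigl(\widetilde{w}_{ij}-w_{ij}\bigr)=\sum_{\{i,j\}\in E(S)}\bigl(w_{ii}+w_{jj}\bigr).
\end{equation*}

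The second step is to show that this residual is nonnegative. Since $W$ is positive definite, evaluating its quadratic form on the standard basis vector $e_i$ gives $w_{ii}=e_i^\top W e_i>0$ for every $i\in[n]$; in particular each diagonal entry is nonnegative. Hence every summand $w_{ii}+w_{jj}$ is nonnegative, so the entire sum is nonnegative (the hypothesis $|S|\geq 2$ merely ensures $E(S)\neq\emptyset$, so that the bound is non-vacuous). Combining this with the displayed identity yields $\widetilde{w}(S)-w(S)\geq 0$, which is exactly the claimed inequality $w(S)\leq\widetilde{w}(S)$.

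I do not expect any genuine obstacle here: once the definitions are made explicit the statement reduces to the observation that $W$ has nonnegative diagonal, the same ingredient (in weaker form) that underlies Lemma~\ref{lemma:positive}. The only point requiring a moment of care is that the two weighted sums range over identical index sets, so there is no reindexing or double counting to track; positive definiteness enters solely through the weak consequence $w_{ii}\geq 0$, and no finer spectral information is needed.
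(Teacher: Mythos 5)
Your proof has a genuine gap, rooted in a misreading of what $w(S)$ denotes in this lemma. Here $w(S)$ is not the sum of the off-diagonal weights $w_{ij}$ over $E(S)$ alone: it is the value of the quadratic form, i.e.\ the QP objective. Indeed, in the proof of Theorem~1 the paper writes $Z=\bm{\chi}_{S_t}^{\top}A_{{\bf x}_t}^{-1}\bm{\chi}_{S_t}=w(S_t)$, and this lemma is invoked precisely to conclude $\widetilde{w}(S_{\mathrm{OPT}})\geq w(S_{\mathrm{OPT}})=\mathrm{OPT}$ for the QP optimum. Hence $w(S)$ contains the diagonal contribution $\sum_{i\in S}w_{ii}$ exactly once, in addition to the off-diagonal terms, so your displayed identity $\widetilde{w}(S)-w(S)=\sum_{\{i,j\}\in E(S)}\bigl(w_{ii}+w_{jj}\bigr)$ is incorrect under the intended reading. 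The correct comparison, as in the paper's proof, is
\begin{align*}
w(S)=\sum_{\{i,j\}\in E(S)\colon i\neq j}w_{ij}+\sum_{i\in S}w_{ii}
\qquad\text{versus}\qquad
\widetilde{w}(S)=\sum_{\{i,j\}\in E(S)\colon i\neq j}w_{ij}+(|S|-1)\sum_{i\in S}w_{ii},
\end{align*}
so that $\widetilde{w}(S)-w(S)=(|S|-2)\sum_{i\in S}w_{ii}\geq 0$.

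The symptom of the misreading is your remark that $|S|\geq 2$ ``merely ensures $E(S)\neq\emptyset$.'' In fact the hypothesis is essential: each diagonal entry is counted $|S|-1$ times in $\widetilde{w}(S)$ but once in $w(S)$, and the inequality needs $|S|-1\geq 1$. For $|S|=1$ the claim is actually false under the correct reading, since $w(S)=w_{ii}>0=\widetilde{w}(S)$, whereas under your reading both sides vanish. What you have proved --- that the off-diagonal sum alone is bounded by $\widetilde{w}(S)$ --- is strictly weaker than the lemma and would not support the chain of inequalities in the proof of Theorem~1, where $w(S_{\mathrm{OPT}})$ must be the full quadratic-form value; positive definiteness is needed there not only for $w_{ii}>0$ but to make the diagonal surplus work in the right direction.
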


\begin{proof}
We have 
\begin{alignat*}{4}
w(S)=\sum_{e\in E(S)} w_e & =\sum_{\{i,j\}\in E(S)\colon i\neq j}w_{ij} +\sum_{i \in S }w_{ii} \\
& \leq \sum_{\{i, j\} \in E(S)\colon i \neq j}w_{ij} + (|S|-1)\sum_{i \in S}w_{ii} =\widetilde{w}(S),
\end{alignat*}
where the last inequality holds since each diagonal component $w_{ii}$ is positive for all $i \in V$ from the definition of the positive definite matrix.
\end{proof}

\begin{lemma}\label{lemma:ratio}
Let $W \in \mathbb{R}^{n \times n}$ be any positive definite matrix 
and $\widetilde{W}=(\tilde{w}_{ij})$ be the adjacency matrix of
the complete graph constructed
in Algorithm~1. 
Then, for any subset of vertices $S \subseteq V$, 
we have $\frac{\widetilde{w}(S)} {w(S)}
\leq  (|S|-1) \frac{\lambda_{\max}(W)  }{\lambda_{\min}(W)}$,
where $\lambda_{\min}(W)$ and $\lambda_{\max}(W)$ represent the minimum and maximum eigenvalues of $W$, respectively.
\end{lemma}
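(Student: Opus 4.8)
The plan is to reduce the whole statement to two scalar quantities attached to $S$ and then control their ratio using the extreme eigenvalues of $W$ via Rayleigh quotients. Write $m = |S|$, let $D = \sum_{i \in S} w_{ii}$ be the total diagonal mass on $S$, and let $P = \sum_{\{i,j\} \in E(S)} w_{ij}$ be the sum of the genuine off-diagonal weights over the induced edges (each unordered pair counted once). With the weights $\widetilde{w}_{ij} = w_{ij} + w_{ii} + w_{jj}$ produced by Algorithm~1, a direct counting argument — each diagonal term $w_{ii}$ with $i \in S$ occurs in exactly the $m-1$ induced edges incident to $i$ — gives
\[
w(S) = P + D, \qquad \widetilde{w}(S) = P + (m-1)D = w(S) + (m-2)D.
\]
Consequently $\frac{\widetilde{w}(S)}{w(S)} = 1 + (m-2)\frac{D}{w(S)}$, so the lemma reduces to bounding the single ratio $D/w(S)$.

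For this I would use positive definiteness twice. First, an upper bound on the numerator: each $w_{ii} = e_i^\top W e_i$ is the Rayleigh quotient of a unit vector, so $w_{ii} \le \lambda_{\max}(W)$ and hence $D \le \lambda_{\max}(W)\,m$. Second — and this is the delicate point — a lower bound on $w(S)$, even though $P$ may be negative. I would recover it by relating $w(S)$ back to the quadratic form: since $\bm{\chi}_S^\top W \bm{\chi}_S = D + 2P$, we have $w(S) = P + D = \tfrac12\bigl(\bm{\chi}_S^\top W \bm{\chi}_S + D\bigr)$. Both summands are Rayleigh quotients scaled by $\|\bm{\chi}_S\|^2 = m$, so each is at least $\lambda_{\min}(W)\,m$, whence $w(S) \ge \lambda_{\min}(W)\,m > 0$. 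Combining the two estimates yields $\frac{D}{w(S)} \le \frac{\lambda_{\max}(W)}{\lambda_{\min}(W)}$.

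Plugging this in and using $\lambda_{\max}(W)/\lambda_{\min}(W) \ge 1$,
\[
\frac{\widetilde{w}(S)}{w(S)} = 1 + (m-2)\frac{D}{w(S)} \le 1 + (m-2)\frac{\lambda_{\max}(W)}{\lambda_{\min}(W)} \le (m-1)\frac{\lambda_{\max}(W)}{\lambda_{\min}(W)},
\]
which is exactly the claimed bound with $m = |S|$; the chain is valid for $|S| \ge 2$, the only regime used in the algorithm. The main obstacle is precisely the lower bound on $w(S)$: because $P$ can be negative, one cannot simply discard terms, and positive-definiteness must enter through the decomposition $w(S) = \tfrac12(\bm{\chi}_S^\top W \bm{\chi}_S + D)$ so that both pieces are controlled from below by a Rayleigh quotient. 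The combinatorial identities for $w(S)$ and $\widetilde{w}(S)$ and the diagonal upper bound are routine by comparison.
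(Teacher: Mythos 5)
Your route is genuinely different from the paper's, and its core idea is sound. The paper proves the lemma by a two-case analysis on the sign of the off-diagonal mass $P=\sum_{\{i,j\}\in E(S)}w_{ij}$: when $P\geq 0$ it bounds $\widetilde{w}(S)\leq(|S|-1)\,w(S)$ directly (no eigenvalues needed), and when $P<0$ it discards the off-diagonal terms, bounds each $w_{ii}\leq\lambda_{\max}(W)$, and lower-bounds $w(S)=\bm{\chi}_S^{\top}W\bm{\chi}_S\geq\lambda_{\min}(W)|S|$ via the Rayleigh quotient. You replace the case split by a single algebraic identity plus the one ratio bound $D/w(S)\leq\lambda_{\max}(W)/\lambda_{\min}(W)$; the Rayleigh-quotient ingredients are the same as the paper's Case (ii), but your packaging is cleaner and uniform.

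There is, however, one genuine mismatch to repair. Your opening identity $w(S)=P+D$ reads $w(S)$ as a once-counted edge sum of a self-looped graph, but the $w(S)$ the paper actually uses --- in Case (ii) of its own proof of this lemma and, crucially, in the proof of Theorem~1 where the lemma is invoked --- is the full quadratic form $w(S)=\bm{\chi}_S^{\top}W\bm{\chi}_S=2P+D$, i.e.\ the QP objective with off-diagonal entries counted twice. (The paper is itself loose about this factor of $2$, but Theorem~1 pins down the convention.) Under that convention your identity $\widetilde{w}(S)=w(S)+(|S|-2)D$ picks up an extra $-P$, and since $P$ can be negative, the inequality you prove (denominator $P+D$) is strictly weaker than the one needed (denominator $2P+D$) in exactly that regime, so it does not plug into Theorem~1 as stated. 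The fix is small and preserves your structure: write $\widetilde{w}(S)=\tfrac{1}{2}w(S)+\bigl(|S|-\tfrac{3}{2}\bigr)D$, apply your same two bounds $D\leq\lambda_{\max}(W)|S|$ and $w(S)\geq\lambda_{\min}(W)|S|$, and absorb the leading $\tfrac{1}{2}$ using $\lambda_{\max}(W)/\lambda_{\min}(W)\geq 1$; this gives $\widetilde{w}(S)/w(S)\leq\bigl(\tfrac{1}{2}+ (|S|-\tfrac{3}{2})\tfrac{\lambda_{\max}(W)}{\lambda_{\min}(W)}\bigr)\leq(|S|-1)\tfrac{\lambda_{\max}(W)}{\lambda_{\min}(W)}$, still with no case analysis.
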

\begin{proof}
We consider the following two cases: 
Case (i) $\sum_{\{i, j\} \in E(S) \colon i \neq j}w_{ij} \geq 0 $
and Case (ii) $\sum_{\{i, j\} \in E(S)\colon i \neq j}w_{ij} < 0 $.

\paragraph{Case (i)}
Since $W=(w_{ij})_{1 \leq i, j \leq n}$ is positive definite matrix,
we see that
diagonal component $w_{ii}$ is positive for all $i \in V$.
Thus,
we have
    \begin{alignat*}{4}
      \widetilde{w}(S) &=\sum_{(i, j) \in E(S)\colon i \neq j}w_{ij}
    + (|S|-1)\sum_{i \in S}w_{ii} \\
   & \leq (|S|-1) \left(  \sum_{(i, j)
    \in E(S)\colon i \neq j}w_{ij}
    + \sum_{i \in S}w_{ii}  \right) =(|S|-1)w(S).  
    \end{alignat*}

Since $W$ is positive definite, 
we have $w(S)>0$. That gives us the desired result.

\paragraph{Case (ii)}
  In this case, we see that
    \begin{align*}
     \widetilde{w}(S) =\sum_{(i, j) \in E(S)\colon i \neq j}w_{ij} + (|S|-1)\sum_{i \in S}w_{ii} 
     \leq  (|S|-1)\sum_{i \in S}w_{ii}.
    \end{align*}

    For any diagonal component $w_{ii}$ we have that $w_{ii} \leq \max_{1 \leq i, j \leq n}  w_{ij} $.
    For the largest component  $\max_{1 \leq i, j \leq n}  w_{ij}$, we have
    \begin{alignat*}{3}
    \max_{1 \leq i, j \leq n}w_{ij} & 
    \leq \max_{1 \leq i, j \leq n}  \frac{1}{2} e_i^{\top} W e_i +\frac{1}{2} e_j^{\top} W e_j 
    & \leq \lambda_{\max}(W),
    \end{alignat*}
    where the first inequaltiy is satisfied since $W$ is positive definite.
    Thus, we obtain
    \begin{align} \label{tildewS}
    \widetilde{w}(S) < |S|(|S|-1)\lambda_{\max}(W).
    \end{align}
    For the lower bound of $w(S)$, we have
  \begin{align}\label{wS}
  w(S)= \bm{\chi}_{\scalebox{0.5}{$S$}}^\top W \bm{\chi}_{\scalebox{0.5}{$S$}} 
  =\frac{ \bm{\chi}_{ \scalebox{0.5} {$S$}}^\top W \bm{\chi}_{ \scalebox{0.5} {$S$}} }{   \|\bm{\chi}_{ \scalebox{0.5} {$S$}}^\top \bm{\chi}_{ \scalebox{0.5} {$S$}} \|_2^2 }|S|>\lambda_{\min}(W)|S| .
  \end{align}
   
  Combining (\ref{tildewS}) and (\ref{wS}), we obtain
  \begin{align*}
  \frac{\widetilde{w}(S)}{w(S)} < \left(|S|-1\right) \frac{\lambda_{\max}(W) }{\lambda_{\min}(W)}, 
  \end{align*}
  which completes the proof. 

\end{proof}
We are now ready to prove Theorem~1. 
\begin{proof}[Proof of Theorem~1]
For any round $t>n$,
let $\bm{\chi}_{ \scalebox{0.5} {$S_t$}}$ be the approximate solution obtained by Algorithm~1 and $Z$ be its objective value.
Let $\widetilde{w}: 2^{[n]}\rightarrow \mathbb{R}$ be the weight function defined by Algorithm~1.
We denote the optimal value of QP by OPT.
Let us denote optimal solution of the D$k$S for $G([n],E, \widetilde{w})$ by $\widetilde{S}_\text{OPT}$.
Adjacency matrix $W$ is a symmetric positive definite matrix;
thus, Lemmas~4, 5 and 6 hold for $W$.
We have 
\begin{alignat}{5}\label{eq:tildew}
\widetilde{w}(S_t)& \geq \alpha_\text{D$k$S}\widetilde{w}(\widetilde{S}_\text{OPT}) \hspace{1.0cm} (\because S_t \ \text{is an} \ \alpha_\text{D$k$S}\text{-approximate solution for D$k$S}(\widetilde{G}). )\notag\\ \notag
&\geq \alpha_\text{D$k$S}\widetilde{w}(S_\text{OPT}) \\ 
&\geq \alpha_\text{D$k$S}w(S_\text{OPT}). \hspace{1.0cm}  (\because {\rm Lemma~5}).
\end{alignat}
Thus, we obtain 
\begin{alignat*}{5}
 Z=\bm{\chi}_{ \scalebox{0.5} {$S_t$}}^\top \Atin \bm{\chi}_{ \scalebox{0.5} {$S_t$}}& = w(S_t)   \notag  \\\notag 
 &\geq \frac{1 }{|S|-1} \frac{\lambda_{\min}(W) }{\lambda_{\max}(W)} \tilde{w}(S_t) \hspace{0.3cm} (\because {\rm Lemma~6}) \\\notag
&  \geq \frac{1}{|S|-1}  \frac{\lambda_{\min}(W) }{\lambda_{\max}(W)}  \alpha_\text{D$k$S}w(S_\text{OPT}) \hspace{0.3cm} (\because {\rm \eqref{eq:tildew}})\\
&  = \frac{1}{|S|-1}  \frac{\lambda_{\min}(W) }{\lambda_{\max}(W)}  \alpha_\text{D$k$S}\text{OPT}.
\end{alignat*}

Therefore, we obtain
 $ Z \geq  \left(
 \frac{1}{k-1} 
\frac{\lambda_{\min}(W)}{ \lambda_{\max}(W)} \alpha_\text{D$k$S}
\right)
\text{OPT}$. 
\end{proof}


\subsection{Proof of Theorem~2}
\begin{proof}

Updating $\Atin$ can be done in $O(n^2)$ time,
and computing the empirical best super-arm can be done in $O(n)$ time.
Moreover,
confidence maximization CEM can be approximately solved in polynomial-time,
since quadratic maximization QP is solved in polynomial-time as long as we employ polynomial-time algorithm as the D$k$S-Oracle.
Let ${\rm poly}(n)_{{\rm D}k {\rm S} }$ be the computation time of D$k$S-Oracle.
Then, we can guarantee that {\tt SAQM} runs in $O( \max\{ n^2, {\rm poly}(n)_{{\rm D}k {\rm S} } \})$ time.
Most existing approximation algorithms for the D$k$S have efficient computation time.
For example,
if we employ the algorithm by
Feige, Peleg, and Kortsarz~\cite{Feige2001}
as the D$k$S-Oracle that runs in $O(n^\omega)$ time in Algorithm~1, 
the running time of \texttt{SAQM} becomes $O(n^{\omega})$,
where the exponent $\omega\le 2.373$ is equal to that of the computation time of matrix multiplication
(e.g., see 
\cite{LeGall14}).
If we employ the algorithm by
Asahiro et al.~\cite{Asahiro2000} that runs in $O(n^2)$,
the running  time of \texttt{SAQM} also becomes $O(n^2)$.
\end{proof}

\subsection{Proof of Theorem~3}\label{apx:thm6}

Before stating the proof of Theorem~3,
we give the two technical lemmas.

\begin{lemma}\label{lemma:condition_number}
For any round $t$,
the condition number of $A_{\bf{x}_t}$ is bound by
\begin{align*}
\frac{\lambda_{\max}(\At)}{\lambda_{\min}(\At)}=O(k).
\end{align*}

\end{lemma}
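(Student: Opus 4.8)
The plan is to bound $\lambda_{\max}(\At)$ and $\lambda_{\min}(\At)$ separately in terms of the single-arm pull counts and then divide. Throughout write $\At=\sum_{i=1}^{t}\chii\chii^{\top}$ and let $T_e(t)=\sum_{i=1}^{t}\chii(e)$ be the number of the first $t$ pulls whose super-arm contains arm $e$, so that $\sum_{e\in[n]}T_e(t)=kt$.

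First I would bound the top eigenvalue. For any unit vector $v$ we have $v^{\top}\At v=\sum_{i=1}^{t}(\chii^{\top}v)^{2}$, and since each $\chii$ has exactly $k$ nonzero unit entries, Cauchy--Schwarz gives $(\chii^{\top}v)^{2}\le k\sum_{e\in M_i}v_e^{2}$. Summing over $i$ and exchanging the order of summation yields $v^{\top}\At v\le k\sum_{e\in[n]}T_e(t)\,v_e^{2}\le k\max_{e}T_e(t)$. Hence $\lambda_{\max}(\At)\le k\max_{e}T_e(t)$, a clean bound that holds for every allocation.

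The harder and main step is a matching lower bound on $\lambda_{\min}(\At)$, and this is where the allocation rule must enter. Writing $\At=\sum_{M\in\mathrm{supp}(p)}T_M(t)\,\chiM\chiM^{\top}$, the tracking rule $M_t=\argmin_{M\in\mathrm{supp}(p)}T_M(t)/p_M$ keeps $T_M(t)=t\,p_M+O(1)$ for every $M$ in the support, so $\At=t\Lambda_p+R_t$ with $\|R_t\|_2$ bounded independently of $t$. For the G-allocation (and for the uniform allocation), the permutation symmetry of $\cM=\binom{[n]}{k}$ forces a symmetric optimal design, so $\Lambda_p$ takes the two-parameter form $\Lambda_p=(a-b)I+b\,\mathbf{1}\mathbf{1}^{\top}$ with $a=\Pr_{M\sim p}[e\in M]=k/n$ and $b=\Pr_{M\sim p}[\{e,f\}\subseteq M]=\frac{k(k-1)}{n(n-1)}$; its eigenvalues are $a-b$ with multiplicity $n-1$ and $a-b+bn$ once, so $\lambda_{\min}(\Lambda_p)=a-b=\frac{k(n-k)}{n(n-1)}=\Theta(k/n)$ and $\lambda_{\max}(\Lambda_p)=k^{2}/n$. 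Taking $t$ large enough (this is where $t>n$ is used) that the least eigenvalue $\Theta(tk/n)$ of $t\Lambda_p$ dominates the bounded perturbation $R_t$, I obtain $\lambda_{\min}(\At)=\Omega(tk/n)$.

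Finally I would combine the two estimates. Since a symmetric, hence balanced, design has $T_e(t)=t q_e+O(1)=O(tk/n)$ for every $e$, the first step gives $\lambda_{\max}(\At)=O(tk^{2}/n)$, while the second gives $\lambda_{\min}(\At)=\Omega(tk/n)$, so their ratio is $O(k)$, which is the claim. Here I use $k\le n/2$ so that $n-k=\Theta(n)$; this is without loss of generality, since complementing every super-arm turns a size-$k$ instance into a size-$(n-k)$ one. The delicate point throughout is the lower bound on $\lambda_{\min}$: one must control the tracking error $R_t$ and argue that the actually used allocation (a polynomial-support rounding of the continuous G-optimal design, rather than the exact uniform distribution) still yields $\Lambda_p$ with $\lambda_{\min}=\Omega(k/n)$ and balanced coverage, so that the closed-form eigenvalues of the symmetric design transfer to $\At$ up to constants.
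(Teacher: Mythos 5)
Your argument reaches the paper's conclusion by a genuinely different and, on the decisive point, sharper route. The paper's own proof consists of two crude bounds: $\lambda_{\max}(\At)\le kt$ by Cauchy--Schwarz, and, for any strategy in which every sampled super-arm keeps a sampling fraction at least $r$, $\lambda_{\min}(\At)\ge r\,t\,\lambda_{\min}\bigl(\sum_{M\in{\bf M}_t}\chiM\chiM^{\top}\bigr)$; it then declares the ratio $O(k)$, implicitly treating $r\,\lambda_{\min}\bigl(\sum_{M}\chiM\chiM^{\top}\bigr)$ as a constant. That can never hold: if $s$ is the number of distinct super-arms sampled, then $r\le 1/s$ while $\lambda_{\min}\bigl(\sum_{M}\chiM\chiM^{\top}\bigr)\le \mathrm{tr}\bigl(\sum_{M}\chiM\chiM^{\top}\bigr)/n=ks/n$, so $r\,\lambda_{\min}(\cdot)\le k/n$ and the paper's two inequalities cannot certify anything better than a ratio of order $n$. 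Your proof avoids exactly this trap: you accept that for the natural (symmetric) designs $\lambda_{\min}(\Lambda_p)=\Theta(k/n)$, hence $\lambda_{\min}(\At)=\Theta(tk/n)$, and you compensate with the per-coordinate Cauchy--Schwarz bound $\lambda_{\max}(\At)\le k\max_{e}T_e(t)=O(tk^{2}/n)$ for balanced allocations. This refinement of the upper bound is what actually produces $O(k)$; in that sense your route does not merely differ from the paper's, it supplies the step the paper's argument is missing.

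Two caveats on your own write-up. First, the ``without loss of generality $k\le n/2$'' by complementation is not valid for this lemma: $\At$ is built from the size-$k$ indicator vectors the algorithm actually pulls, and complementing super-arms changes the matrix. Indeed, for the uniform design with $k=n-1$ your own eigenvalue formulas give condition number $(n-1)^2=k^{2}$, so the $O(k)$ claim genuinely requires $n-k=\Omega(n)$ (the hidden constant behaves like $n/(n-k)$); this restriction must be stated rather than waved away. Second, your lower bound rests on the realized allocation being the symmetric design, or at least balanced with $\lambda_{\min}(\Lambda_p)=\Omega(k/n)$, whereas the algorithm uses a support-restricted rounding of the G-optimal design; you flag this but do not close it. Since the paper's proof needs an analogous (and, as noted above, in fact untenable) unproved property, this gap is shared rather than introduced by you, but a complete proof would have to verify balancedness and the spectral lower bound for the allocation the algorithm really uses, e.g., via a symmetrization argument together with control of the tracking error $T_M(t)-t\,p_M$.
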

\begin{proof}
For $\lambda_{\max}(\At)$, we have
\begin{align*}
\lambda_{\max}(\At)&= \max_{\|y\|_2=1} y^{\top}\Atin y 
=\max_{\|y\|_2=1} y^{\top}\sum_{t'=1}^t x_{t'}^{\top} x_{t'}y 
=\max_{\|y\|_2=1} \sum_{t'=1}^t  \left( x_{t'}^{\top} y\right)^2 
 \leq kt.
\end{align*}
Next, we give a lower bound of  $\lambda_{\min}(\At)$.
Recall that the sequence ${\bf x}_t= (\bm{\chi}_{\sm M_1},\ldots,\bm{\chi}_{\sm M_t})$ represents for
the sequence of $t$-set selections  ${\bf M}_t=(M_1, M_2, \ldots, M_t) \in \cM^t$ and $T_M(t)$ is the number of times
that super-arm $M$ is selected before $t+1$-th round.
In any super-arm selection strategy
that samples $M_t$ for any $t \in [T]$
such that
$\min_{M \in {\bf M}_t}  T_{M}(t) / t \geq r$ for some constant $r>0$,
we have $\lambda_{\min}(\At) \geq r \lambda_{\min}(\sum_{M \in {\bf M}_t} \chiM \chiM^{\top} )t$.
From the above discussion,
we have $\frac{\lambda_{\max}(\At)}{\lambda_{\min}(\At)}=O(k)$.

\end{proof}

Next,
for any $t>0$,
let us define random event ${\cal E}'_t$ as
\begin{align}\label{event}
\left\{ \forall M \in {\cal M},\ |\theta(M)-\thetahat(M) |  \leq C_t \| \bchi_{\sm {$M$}} \|_{\Atin}  \right\}.
\end{align}
We note that random event $\cE'_t$ characterizes
the event that the confidence bounds of
all super-arm $M \in {\cal M}$
are valid at round $t$.
Next lemma indicates that,
if the confidence bounds are valid,
then {\tt SAQM} always outputs $\varepsilon$-optimal super-arm $\Mhatstar$ when it stops.
\begin{lemma}\label{epsilonoptimal}
Given any $t>n$,
assume that ${\cal E}'_t$ occurs.
Then,
if {\tt SAQM} (Algorithm~2) terminates at round $t$,
we have $\theta(M^*)-\theta(\widehat{M}^*)\leq \varepsilon$.
\end{lemma}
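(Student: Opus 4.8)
The plan is to show that, on the event ${\cal E}'_t$ defined in~\eqref{event}, once the stopping condition~\eqref{stop_ecb} holds the returned super-arm $\widehat{M}^*=\widehat{M}^*_t$ must be $\varepsilon$-optimal. I would first dispose of the trivial case $M^*=\widehat{M}^*_t$, for which $\theta(M^*)-\theta(\widehat{M}^*)=0\leq\varepsilon$ immediately, and then concentrate on the remaining case $M^*\in{\cal M}\setminus\{\widehat{M}^*_t\}$.

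In this case I would start from the upper confidence bound implied by~\eqref{event}: under ${\cal E}'_t$ we have $\theta(M^*)\leq\thetahat(M^*)+C_t\|\bm{\chi}_{M^*}\|_{\Atin}$. Because $M^*\neq\widehat{M}^*_t$, the empirical mean $\thetahat(M^*)$ is bounded by $\max_{M\in{\cal M}\setminus\{\widehat{M}^*_t\}}\thetahat(M)$, which is precisely the term appearing on the right-hand side of~\eqref{stop_ecb}.

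The crux of the argument is to bound the confidence radius $C_t\|\bm{\chi}_{M^*}\|_{\Atin}$ of the \emph{unknown} optimal super-arm by the computable quantity $\frac{1}{\alpha_t}C_tZ_t$. Since $t>n$ guarantees that $\Atin$ is positive definite, the approximation guarantee of Theorem~\ref{thm:approx} applies to the call Quadratic Maximization$(\Atin)$ in Algorithm~\ref{alg:ECB}; combined with the fact that an $\alpha$-approximation for QP yields a $\sqrt{\alpha}$-approximation for CEM, the returned $M'_t$ satisfies $\max_{M\in{\cal M}}\|\bm{\chi}_M\|_{\Atin}\leq\frac{1}{\alpha_t}\|\bm{\chi}_{M'_t}\|_{\Atin}=\frac{1}{\alpha_t}Z_t$, where $Z_t$ is the CEM objective value of $M'_t$. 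As $M^*$ is feasible for CEM, this gives $C_t\|\bm{\chi}_{M^*}\|_{\Atin}\leq\frac{1}{\alpha_t}C_tZ_t$, i.e. the computable term in~\eqref{stop_ecb} dominates the confidence radius of $M^*$. Substituting this bound together with $\thetahat(M^*)\leq\max_{M\in{\cal M}\setminus\{\widehat{M}^*_t\}}\thetahat(M)$ and then invoking the stopping condition~\eqref{stop_ecb}, I would obtain $\theta(M^*)\leq\thetahat(\widehat{M}^*_t)-C_t\|\bm{\chi}_{\widehat{M}^*_t}\|_{\Atin}+\varepsilon$; finally the lower confidence bound $\thetahat(\widehat{M}^*_t)-C_t\|\bm{\chi}_{\widehat{M}^*_t}\|_{\Atin}\leq\theta(\widehat{M}^*_t)$ from~\eqref{event} yields $\theta(M^*)-\theta(\widehat{M}^*)\leq\varepsilon$.

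The step I expect to require the most care is exactly this control of the unobservable radius $\|\bm{\chi}_{M^*}\|_{\Atin}$: the algorithm never computes it, and the whole argument hinges on recognizing that the approximate CEM value $Z_t$, inflated by $1/\alpha_t$, is a certified upper bound on $\max_{M\in{\cal M}}\|\bm{\chi}_M\|_{\Atin}$ and hence on $\|\bm{\chi}_{M^*}\|_{\Atin}$. Everything else is a direct chaining of the two one-sided confidence bounds of~\eqref{event} with the stopping inequality~\eqref{stop_ecb}.
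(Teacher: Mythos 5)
Your proposal is correct and follows essentially the same argument as the paper: the paper's proof is exactly this chain of inequalities (lower confidence bound on $\widehat{M}^*_t$, the stopping condition~\eqref{stop_ecb}, the approximation guarantee $\frac{1}{\alpha_t}Z_t \geq \max_{M\in\cM} C_t\|\chiM\|_{\Atin} \geq C_t\|\chiMstar\|_{\Atin}$, and the upper confidence bound on $M^*$), merely written in the reverse direction, starting from $\theta(\widehat{M}^*)$ and bounding it below by $\theta(M^*)-\varepsilon$ rather than starting from $\theta(M^*)$. Your identification of the control of the unobservable radius $\|\chiMstar\|_{\Atin}$ via the inflated approximate CEM value as the crux matches the paper's key step precisely.
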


\begin{proof}[Proof of Lemma~\ref{epsilonoptimal}]
If $\widehat{M}^*=M^*$, 
we have the desired result.
Then, we shall assume $\widehat{M}^*\neq M^*$.
We have the following inequalities:
\begin{alignat*}{5}
\theta(\Mhatstar) & \geq \thetahat(\Mhatstar_t)-C_t \normAhatstar  \hspace{0.8cm}({\rm \because event \ {\cal E}_t'}) \\ \notag
&\geq 
\max_{M\in {\cal M}\setminus \{\widehat{M}^*\}}
\widehat{\theta}(M)
+\frac{1}{\alpha_t}{Z_{t}}-\varepsilon  \hspace{0.8cm}  ({\rm \because stopping \ condition })  \\ \notag
&\geq  \max_{M\in {\cal M}
\setminus \{\widehat{M}^*\}}\thetahat(M)  +\maxM C_t \normA -\varepsilon  \hspace{0.8cm}\left(\because Z_t \geq \alpha_t  {\max}_{M \in {\cal M}}C_t \normA \right)  \\\notag
& \geq  \thetahat(M^*) +C_t\normAstar-\varepsilon \\\notag
&\geq  \theta(M^*)-\varepsilon.  \hspace{0.8cm}({\rm \because event \ {\cal E}'_t }) 
\end{alignat*}
\end{proof}

We are now ready to prove Theorem~3.

\begin{proof}[Proof of Theorem~3]
We define event $\cE'$ as $\bigcap_{t=1}^{\infty} \cE'_t$.
We can see that
the probability that event ${\cal E}' $ occurs is at least $1-\delta$ from Proposition~1.
In the rest of the proof,
we shall assume that this event holds.
By Lemma~\ref{epsilonoptimal}
and the assumption on $\cE'$,
we see that the output $\widehat{M}^*$ is $\varepsilon$-optimal super-arm.
Next, we focus on bounding the sample complexity.

A sufficient condition for stopping is that for $M^*$ and for $t>n$,
\begin{align}\label{stopForOpt_ecb}
 \widehat{\theta}(M^*)-C_t \normAstar
\geq  \max_{M  \in {\cal M}\setminus \{M^*\}}
\widehat{\theta}( M)+
\frac{1}{\alpha_t}{Z_{t}}-\varepsilon.
\end{align}
From the definition of $\Atin$,
we have $\Lambda_{p}=\frac{\At}{t}$.
Using $\normAstar \leq \maxM \normA$ and $Z_t \leq \maxM \normA $,
a sufficient condition for \eqref{stopForOpt_ecb} is equivalent to:

\begin{align}\label{eq:stopForOpt}
\widehat{\Delta}(M^*, \overline{M}) 
\geq \left(1+\frac{1}{\alpha_t}\right)C_t \sqrt{   \frac{\rho(p)}{t  }}-\varepsilon,
\end{align}
where $\overline{M} =\argmax_{M \in {\cal M}} \widehat{\Delta}_t (M^*, M)$. 
On the other hand, we have
\begin{align*}
\| \chiMstar-\bm{\chi}_{ \scalebox{0.5} {$\overline{M
}$}} \|_{\Atin} \leq 2 \maxM \|  \chiM \|_{\Atin} \leq 2 \sqrt{\frac{\rho(p)}{t}}.
\end{align*}
Therefore, from Proposition~1 with $\bm{x}=\chiMstar-\chiMbar$, with at least probability $1-\delta$,
we have
\begin{align}\label{proposi-condition}
\widehat{\Delta}_t(M^*,\overline{M}) \geq \Delta(M^*,\overline{M})
-C_t \| \chiMstar-\bm{\chi}_{ \scalebox{0.5} {$\overline{M}$}} \|_{\Atin}
\geq \Delta(M^*,\overline{M})
-2C_t \sqrt{\frac{\rho(p)}{t}}.
\end{align}

Combining \eqref{eq:stopForOpt} and \eqref{proposi-condition},
we see that 
a sufficient condition for stopping becomes the following inequality.
\begin{align*}
\Delta_{\min}-2C_t \sqrt{\frac{\rho(p)}{t}} \geq \left(1+\frac{1}{\alpha_t}\right)C_t \sqrt{\frac{\rho(p)}{t}}-\varepsilon.
\end{align*}
Therefore, we have that a sufficient condition to stop is 
$
t \geq \left(3+\frac{1}{\alpha_t}\right)^2 C_t^2H_{\varepsilon},$
where 
$H_\varepsilon=  \frac{\rho(p)}{(\Delta_{\min}+\varepsilon)^2}$. 
Let $\tau >n$ be the stopping time of the algorithm.
From the above discussion,
we see that
$\tau \leq \left(3+\frac{1}{\alpha_\tau}\right)^2C_\tau^2 H_{\varepsilon}.$
Recalling that $C_\tau= c\sqrt{ \log (c'\tau^2K/\delta)}$,
we have that
\begin{align*}
\tau \leq \left(3+1/\alpha_\tau\right)^2C_\tau^2 H_{\varepsilon} =\left(3+1/\alpha_\tau\right)^2 c^2  \log (c'\tau^2K/\delta)H_{\varepsilon}.
\end{align*}
Let $\tau'$ be a parameter that satisfies
\begin{align}\label{tau}
\tau=\left(3+1/\alpha_\tau\right)^2 c^2  \log (c'{\tau'}^2K/\delta)H_{\varepsilon}.
\end{align}
Then, it is obvious that $\tau' \leq \tau$ holds.
For $N$ defined as
$N=\left(3+1/\alpha_\tau \right)^2 c^2  \log (c'K/\delta)H_{\varepsilon}$,
we have
\begin{align*}
\tau'  \leq \tau 
 = \left(3+1/\alpha_\tau \right)^2 c^2  \log ({\tau'})H_{\varepsilon}+N 
\leq \left(3+1/\alpha_\tau \right)^2 c^2  \sqrt{\tau'} H_{\varepsilon}+N.
\end{align*}
By solving this inequality with $c=2\sqrt{2}\sigma$, we obtain
\begin{alignat*}{5}
\sqrt{\tau'} 
&\leq  4\left(3+1/\alpha_\tau \right)^2 \sigma^2 H_{\varepsilon}+\sqrt{16\left(3+1/\alpha_\tau \right)^4\sigma^4\He^2+N} \\
&\leq 2\sqrt{16\left(3+1/\alpha_\tau \right)^4\sigma^4\He^2+N}.
\end{alignat*}
Let $L=2\sqrt{16\left(3+1/\alpha_\tau \right)^4\sigma^4\He^2+N}$,
which is equal to
the RHS of the inequality.
  We see that $\log L= O\left(
  \log \left(  \frac{\sigma}{\alpha_{\tau}} \He \left(   (\frac{\sigma}{\alpha_\tau})^2\He +\log\left( \frac{K}{\delta}\right) \right)  \right)\right)$.
Then, using this upper bound of $\tau'$ in \eqref{tau},
we have
\[
\tau \leq  8\left(3+\frac{1}{\alpha_\tau}\right)^2 \sigma^2 \He \log\left (   \frac{c'K}{\delta} \right)+C(H_\varepsilon, \delta),
\]
where 
\begin{align*}
C(\He, \delta)&=16\left(3+\frac{1}{\alpha_\tau}\right)^2 \sigma^2 \He \log(L) \\
&=O\left(\sigma^2 \He \log \left(  \frac{\sigma}{\alpha_\tau} \He \left(   \frac{\sigma^2}{\alpha_\tau^2}\He +\log\left( \frac{K}{\delta}\right) \right)  \right)\right)\notag.
\end{align*}

We see that $1/\alpha_{\tau} = O\left( kn^{1/8} \right)$ from Theorem~1 and Lemma~7,
if we use the best approximation algorithm for the D$k$S as the D$k$S-Oracle~\cite{Bhaskara2010}.
Recalling that $\sigma=kR$ and $\log(K/ \delta)\leq k\log(n/\delta)$, 
we obtain 
\begin{align*}
\tau =
O\left(
k^2R^2\He \left(
n^{\frac{1}{4}}k^3 \log \left( \frac{n}{\delta}\right)
+ \log \left( 
n^{\frac{1}{8}} k^3R \He 
\left(
n^{\frac{1}{4}} k^3R^2\He+
\log\left( \frac{n}{\delta}  \right)
\right)
\right)
\right)
\right).
\end{align*}
\end{proof}





\end{document}